\documentclass[runningheads]{llncs}

\usepackage[utf8]{inputenc}
\usepackage[T1]{fontenc}
\usepackage[english]{babel}
\usepackage{graphicx}
\usepackage{amssymb}
\usepackage{here}
\usepackage{tabularx}
\usepackage{amsmath}
\usepackage{cite}
\usepackage{academicons}
\usepackage{xcolor}

\usepackage{tikz,xcolor,hyperref}

\definecolor{lime}{HTML}{A6CE39}
\DeclareRobustCommand{\orcidicon}{
	\begin{tikzpicture}
	\draw[lime, fill=lime] (0,0) 
	circle [radius=0.16] 
	node[white] {{\fontfamily{qag}\selectfont \tiny ID}};
	\draw[white, fill=white] (-0.0625,0.095) 
	circle [radius=0.007];
	\end{tikzpicture}
	\hspace{-2mm}
}

\foreach \x in {A, ..., Z}{%
	\expandafter\xdef\csname orcid\x\endcsname{\noexpand\href{https://orcid.org/\csname orcidauthor\x\endcsname}{\noexpand\orcidicon}}
}

\newcommand{\NNR}{{\sc NNReach}}
\newcommand{\VIP}{{\sc VIP}}
\newcommand{\NECE}{{\sc NECE}}
\newcommand{\ANECE}{{\sc ANECE}}

\newcommand{\MIN}{{\sc MIN}}
\newcommand{\NE}{{\sc NE}}
\newcommand{\R}{{\mathbb{R}}}

\DeclareMathOperator{\CR}{CR}
\DeclareMathOperator{\aCR}{aCR}
\DeclareMathOperator{\GSR}{glob-SR}
\DeclareMathOperator{\GLR}{glob-LR}
\DeclareMathOperator{\GACR}{glob-aCR}
\DeclareMathOperator{\SR}{SR}
\DeclareMathOperator{\LR}{LR}
\DeclareMathOperator{\mVIP}{VIP}
\DeclareMathOperator{\mNNR}{NNReach}
\DeclareMathOperator{\mANECE}{ANECE}
\DeclareMathOperator{\mMIN}{MIN}
\DeclareMathOperator{\mNE}{NE}
\DeclareMathOperator{\mNECE}{NECE}

\begin{document}


\title{Robustness Verification in Neural Networks}
\author{Adrian Wurm\orcidA{}}

\authorrunning{A. Wurm}

\institute{BTU Cottbus-Senftenberg, Lehrstuhl Theoretische Informatik,\\ Platz der Deutschen Einheit 1, 03046 Cottbus, Germany,
\url{https://www.b-tu.de/}
\email{wurm@b-tu.de}}
\maketitle              

\textbf{Abstract:} In this paper we investigate formal verification problems for Neural Network computations. Of central importance will be various robustness and minimization problems such as: Given symbolic specifications of allowed inputs and outputs in form of  Linear Programming instances, one question is whether there do exist valid inputs such that the network computes a valid output? And does this property hold for all valid inputs? Do two given networks compute the same function? Is there a smaller network computing the same function? 

The complexity of these questions have been investigated recently from a practical point of view and approximated by heuristic algorithms. We complement these achievements by giving a theoretical framework that enables us to interchange security and efficiency questions in neural networks and analyze their computational complexities. We show that the problems are conquerable in a semi-linear setting, meaning that for piecewise linear activation functions and when the sum- or maximum metric is used, most of them are in P or in NP at most.

\section{Introduction}
Neural networks are widely used in all kinds of data processing, especially on seemingly unfeasible tasks such as image\cite{Krizhevsky} and language recognition\cite{Hinton}, as well as applications in medicine \cite{Litjens}, and prediction of stock markets \cite{Dixon}, just to mention a few. 
Khan et al. \cite{Khan} provide a survey of such applications, a mathematically oriented textbook concerning structural issues related to Deep Neural Networks is provided by \cite{Calin}.

Neural networks are nowadays also made use of in safety-critical systems like autonomous driving\cite{Grigorescu} or power grid management. In such a setting, when security issues become important, aspects of certification come into play \cite{Huang, dreossi,mahloujifar}. If we for example want provable guarantees for certain scenarios to be unreachable, we first need to formulate them as constraints and precisely state for which property of a network we want verification.

In the present paper we are interested in studying certain verification problems
for NNs in form of particular robustness and minimization problems such as: How will a network react to a small perturbation of the input\cite{occrob}? And how likely is a network to change the classification of an input that is altered a little? These probabilities are crucial when for example a self-driving car is supposed to recognize a speed limit, and they have already been tackled in practical settings by simulations and heuristic algorithms. These approaches however will never guarantee safety, because the conditions under which seemingly correct working nets suddenly tend to decide irrationally often seem arbitrary and unpredictable \cite{athalye,carlini,uesato}. 

Key results will be that a lot of these questions behave very similar under reasonable assumptions. Note that the network in principle is allowed to compute with real numbers, so
the valid inputs we are looking for belong to some space $\mathbb R^n$, but the network
itself is specified by its discrete structure and rational weights defining the linear
combinations computed by its single neurons.

A huge variety of networks arises when changing the underlying activation functions. There are of course many activations frequently
used in NN frameworks, and in addition we could extend verification
questions to nets using all kinds of activation. One issue to be discussed is the
computational model in which one argues. If, for example, the typical sigmoid
activation $f(x) = 1/(1 + e^{-x})$ is used, it has to be specified in which sense it is
computed by the net: For example exactly or approximately, and at which costs
these operations are being performed. In this paper we will cover the most common activation functions, especially $ReLU$.

The paper is organized as follows:
In Section \ref{Section:preliminaries} we collect basic notions, recall the definition of feedforward neural nets as used in this paper as well as for the metrics that we use. Section \ref{Section:robustness} studies various robustness properties and their comparison to each other when using different network structures and metrics. In Section \ref{Section:mini}, we give criteria on whether a network is as small as possible, or if it can be replaced by a smaller network that is easier to evaluate.

The paper ends with some open questions. 

\section{Preliminaries and Network Decision Problems}\label{Section:preliminaries}

We start by defining the problems we are interested in; here,
we follow the definitions and notions of \cite{adrian} and \cite{lange} for everything related to neural networks. The networks considered are exclusively feedforward. In their
most general form, they can process real numbers and contain rational weights. This will
later on be restricted when necessary.

\begin{definition} A (feedforward) neural network N is a layered graph that represents a
function of type $\mathbb R^n \rightarrow \mathbb R^m$, for some $n,m \in \mathbb N$.
The first layer with label $\ell = 0$ is called the \emph{input layer} and consists of $n$ nodes called \emph{input nodes}. 
The input value $x_i$ of the $i$-th node is also taken as its output $y_{0i}  := x_i$.
A layer $1 \leq \ell \leq L - 2$ is called \emph{hidden} and consists of $k(\ell)$ nodes called \emph{computation nodes}. The $i$-th node of layer
$\ell$ computes the output $y_{{\ell}i} = \sigma_{{\ell}i} (\sum\limits_j c_{ji}^{({\ell}-1)}y_{({\ell}-1)j} + b_{{\ell}i}))$. 
Here, the $\sigma_{{\ell}i}$ are (typically nonlinear)  \emph{activation} functions (to be specified later on) and the
sum runs over all output neurons of the previous layer. The $c^{({\ell}-1)}_{ji}$ are real constants which are called \emph{weights}, and $b_{{\ell}i}$ is a real constant  called \emph{bias}.
The outputs of all nodes of layer $\ell$ combined
gives the output $(y_{l0}, . . . , y_{l(k-1)})$ of the hidden layer.
The final layer $L - 1$ is called \emph{output layer} and consists of $m$ nodes called \emph{output nodes}.
The $i$-th node computes an output $y_{(L-1)i}$ in the same way as a node in a hidden
layer. The output $(y_{(L-1)0}, . . . , y_{(L-1)(m-1)})$ of the output layer is considered
the output $N(x)$ of the network $N$.
\end{definition}

Note that above we allow several different activation functions in a single network. This basically is because for some results technically the identity is necessary as a second activation function beside the 'main' activation function used. All of our results hold for this scenario already.

Since we want to study its complexity in the Turing model, we restrict all weights and
biases in a NN to be rational numbers. The problem \NNR\ involves two Linear Programming LP instances in a decision version, therefore recall that such an instance consists of a system of (componentwise) linear
inequalities $A\cdot x \leq b$ for rational matrix $A$ and vector $b$ of suitable dimensions.
The decision problem asks for the existence of a real solution vector $x.$ As usual, we refer to this problem as Linear Program Feasibility LPF. By abuse of notation, we denote by $A$ also the set described by the pair $(A,b)$.

As usual for neural networks, we consider different choices for the activation functions used, but concentrate the most frequently used one, namely $ReLU(x)=max\{0,x\}$. We name nodes after their internal activation function, so we call nodes with activation function $\sigma(x)=x$ identity nodes and nodes with activation function $\sigma(x)=ReLU(x)$ $ReLU$-nodes for example.

We next recall from \cite{adrian} the definition of the decision problems \NNR, \VIP\ and \NE.
These problems have been investigated in \cite{Huang} and \cite{Albarghouthi}, specifically \NNR\ in \cite{adrian}, \cite{lange} and  \cite{Katz}.

\begin{definition}
a) Let $F$ be a set of activation functions from $\mathbb{R}$ to $\mathbb{R}$.  
An instance of the \emph{reachability problem for neural networks} \NNR$(F)$  
consists of a (feedforward) neural network $N$ with all its activation functions belonging to $F$, rational data as
weights and biases, and 
two instances $A$ and $B$ of LP in decision version with rational data, 
one with the input variables of $N$ as variables, and the other 
with the output variables of $N$ as variables. 
These instances are also called \emph{input} and \emph{output specifications}, respectively. 
The problem is to decide if there exists an $x \in \mathbb R^n$ that satisfies the input specifications such that the output $N(x)$ 
satisfies the output specifications. 

b) The problem \emph{verification of interval property} $\mVIP(F)$ consists of the same instances. The question is whether for all $x \in \mathbb{R}^n$ satisfying the input specifications, $N(x)$ will satisfy the output specifications.

c) The problem \emph{network equivalence} $\mNE(F)$ is the question whether two $F$-networks $N_1$ and $N_2$ compute the same function or not.

d) The size of a network is $T \cdot L$; here, $T$ denotes the number of neurons in the net $N$ and $L$ is the
maximal bit-size of any of the weights and biases. The size of an instance of \NNR, \VIP\ or \NE\ is sum of the network size and the usual bit-sizes of the LP-instances.
\end{definition}

We omit $F$ in the notation if it is obvious from the context and write \VIP$(N,A,B)$ if \VIP$(F)$ holds for the instance $(N,A,B)$. The notations for the other problems are similar.

We additionally define certain robustness properties that were examined for example in \cite{Ruan} and use them to assess the computed function in a more metric way similar to \cite{Casadio}.

\begin{definition} Let $N$  be a NN, $N_i$ the projection of $N$ on the $i$-th output dimension, $1\leq j\leq m$ an output dimension, $\bar x\in\mathbb R^n$, $\varepsilon,\delta\in\mathbb Q_{\geq0}\cup\{\infty\}$ and $d$ a metric on $\R^n$ as well as $\R^m$.

a) The classification of a network input $x$ is the output dimension $1\leq i\leq m$ with the biggest value $N_i(x)$, which is interpreted as the attribute that the input is most likely to have. We say that $N$ has $\varepsilon$-\emph{classification robustness} in $\bar x$ with respect to $d$ iff the classification of every input $x$ that is $\varepsilon$-close to $\bar x$ is the same as for $\bar x$, meaning 
\[\CR_d(N,\varepsilon,\bar x,j):\Leftrightarrow\forall  x: d(x,\bar x)\leq\varepsilon\Rightarrow \text{arg}\,\max\limits_{i}N_i(x)=j\]

and \emph{arbitrary} $\varepsilon$-\emph{classification robustness}
iff
\[\aCR_d(N,\varepsilon,\bar x):\Leftrightarrow\exists j\in\{1,...,m\}:\CR_d(N,\varepsilon,\bar x,j)\]

b) We say that $N$ has $\varepsilon$-$\delta$-\emph{standard robustness} in $\bar x$ with respect to $d_\lambda$ iff $N$ interpreted as a function is $\varepsilon$-$\delta$-continuous in $\bar x$, meaning
\[\SR_d(N,\varepsilon,\delta,\bar x):\Leftrightarrow\forall  x: d(x,\bar x)\leq\varepsilon\Rightarrow d(N(\bar x),N(x))\leq\delta\]

c) We say that $N$ has $\varepsilon$-$L$-\emph{Lipschitz robustness} in $\bar x$ with respect to $d$ iff
\[\LR_d(N,\varepsilon,L,\bar x):\Leftrightarrow\forall  x: d(x,\bar x)\leq\varepsilon\Rightarrow d(N(\bar x),N(x))\leq L\cdot d(\bar x,x)\] 

d) For a specific set $F$ of activation functions, by abuse of notation we denote by $\CR_d(F)$ the decision problem whether $\CR_d(N,\varepsilon,\bar x,j)$ holds where we allow as input only networks $N$ that use functions from $F$ as activations, the notations for the other problems are defined similarly.

We denote by $d_p$ the metric induced by the $p$-Norm, meaning $d_p(x,y)=(\sum\limits_{i=1}^n (x_i-y_i)^p)^{\frac 1p}$.
\end{definition}

Note that $d_1$ and $d_\infty$ share the property that all $\varepsilon$-balls $B_{\varepsilon,d}(x)$ are semi-linear sets and that they can also be described as instances of LPF.

The property $\aCR$ seems redundant and unnatural at first, it is however not equivalent to $\CR$ at an extent that one could hastily assume. Note that it is in general already very hard to calculate the solution of just one network computation $N(x)$ for given $N$ and $x$ for certain activation functions, the result does not even have to be rational or algebraic any more. It might even be impossible to determine the biggest output dimension, i.e., to tell whether $N(x)_i\geq N(x)_j$, for in the Turing model, one is not capable of precisely computing certain sigmoidal functions, which is why seemingly obvious reductions from $\CR$ to $\aCR$ fail. Assume for example that an activation similar to the exponential function is included, in this case the problem might become hard for Tarskis exponential problem, which is not even known to be decidable. Checking the properties $\aCR$ and $\CR$ is therefore not necessarily of the same computational complexity. We also need $\aCR$ in reduction proofs later on.   

\section{Complexity Results for Robustness}\label{Section:robustness}

In network decision problems, $d_1$ and $d_\infty$ behave fundamentally different than the other $p$-metrics, for they are the only ones that lead to linear constraints. General polynomial systems tend have a higher computational complexity than linear ones, compare for example $\exists\R$ (NP-hard) with LP (in P) or Hilberts tenth problem (undecidable) with integer linear systems (NP-complete). The following proposition partially covers the linear part of our network decision problems.

\begin{proposition}\label{VIPcoNP}
Let $F$ be a set of piecewise linear activation functions. 
\begin{description}
\item[a)] The problems $\mVIP(F)$ and $\mNE(F)$ are in co-NP

\item[b)] Let $d\in\{d_1,d_\infty\}$. Then  $\SR_d(F)$, $\CR_d(F)$ and $\LR_d(F)$ are in co-NP.
\end{description}
This is specifically the case for $F=\{ReLU\}$.
\end{proposition}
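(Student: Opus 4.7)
The plan is to show that, for every problem in the statement, the \emph{complement} lies in NP by guessing a polynomial-size certificate and then verifying it by an LP feasibility instance (LPF $\in$ P). The starting observation is that when every $\sigma \in F$ is piecewise linear with finitely many pieces over rational breakpoints, a simultaneous choice of a linear piece for each neuron---call it an \emph{activation pattern} $P$---renders $N$ an affine map $N_P$ of the input $x$ whose coefficients are computable in polynomial time, and turns the compatibility conditions (each neuron's pre-activation lies in the selected piece) into a polynomial-size system $\Pi_P(x)$ of linear constraints. The encoding of $P$ is linear in the number $T$ of neurons, so an NP machine can guess it.

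For $\mVIP(F)$, the complement asks for $x \in A$ with $N(x) \notin B$. I would guess an activation pattern $P$ and an inequality $\beta^\top y \le \beta_0$ of $B$ intended to be violated, then test feasibility of $\{x \in A,\ \Pi_P(x),\ \beta^\top N_P(x) > \beta_0\}$; strict inequalities cause no problem for polynomial-time LPF. Conversely every actual counterexample lies in the region of some $P$ and violates some inequality of $B$. For $\mNE(F)$ the same idea works on $(N_1,N_2)$ simultaneously: guess patterns $P_1, P_2$, an output coordinate $i$, and a sign $\pm$, then test feasibility of $\pm((N_1)_{P_1}(x) - (N_2)_{P_2}(x))_i > 0$ together with $\Pi_{P_1}(x)$ and $\Pi_{P_2}(x)$.

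For part (b) I use the extra fact recorded after the definition of $d_p$: for $d \in \{d_1, d_\infty\}$ the $\varepsilon$-ball $\{x : d(x,\bar x) \le \varepsilon\}$ admits a polynomial-size LP description (directly for $d_\infty$, and via auxiliaries $w_k \ge \pm(x_k - \bar x_k),\ \sum_k w_k \le \varepsilon$ for $d_1$), and that $N(\bar x)$ is computable in polynomial time. For $\SR$, the complement guesses an activation pattern and reduces $d(N(x),N(\bar x)) > \delta$ to a single linear strict inequality: for $d_\infty$ by guessing an index--sign pair $(i, \pm)$; for $d_1$ by guessing an output sign pattern $s \in \{\pm 1\}^m$ and requiring $\sum_i s_i(N_i(x) - N_i(\bar x)) > \delta$, exploiting the identity $\|v\|_1 = \max_s s^\top v$. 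For $\CR$, the complement guesses an activation pattern and an index $i \neq j$, and tests feasibility of $N_i(x) \ge N_j(x)$ with the ball and compatibility constraints.

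The main obstacle is $\LR$ under $d_1$, where $d_1(N(x),N(\bar x)) > L\cdot d_1(x,\bar x)$ has sums of absolute values on both sides and is not linearized by a single sign-pattern guess. My plan is to guess an output sign pattern $s \in \{\pm 1\}^m$ and introduce input auxiliaries $u_k$ with $u_k \ge \pm L(x_k - \bar x_k)$, so that $\sum_k u_k \ge L\cdot d_1(x,\bar x)$. The single strict inequality
\[
\sum_i s_i\bigl(N_i(x) - N_i(\bar x)\bigr) \;>\; \sum_k u_k
\]
is then feasible, together with the ball and compatibility constraints, exactly when some $x$ violates the Lipschitz bound: the forward direction sets $u_k = L|x_k - \bar x_k|$ and picks $s$ matching the signs of the $N_i(x) - N_i(\bar x)$, while the converse uses $\sum_i s_i(N_i(x) - N_i(\bar x)) \le d_1(N(x),N(\bar x))$ and $\sum_k u_k \ge L\cdot d_1(x,\bar x)$. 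The $d_\infty$ analogue is cleaner: guess $(i,\pm)$ and demand $\pm(N_i(x) - N_i(\bar x)) > L(x_k - \bar x_k)$ and $\pm(N_i(x) - N_i(\bar x)) > -L(x_k - \bar x_k)$ for every coordinate $k$, which captures $|N_i(x) - N_i(\bar x)| > L\cdot \max_k |x_k - \bar x_k|$.
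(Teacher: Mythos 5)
Your proposal is correct and follows essentially the same strategy as the paper's proof: nondeterministically guess an activation pattern (which linear piece each neuron operates on) plus the violated constraint, then verify feasibility of the resulting system of linear inequalities with some strict ones, which is decidable in polynomial time. You actually supply more detail than the paper for part (b) --- in particular the sign-pattern guess and auxiliary variables $u_k$ that linearize both sides of the $\LR_{d_1}$ condition, a case the paper dismisses with ``works in the same way'' --- and that extra care is sound.
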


\begin{proof} Finding a witness for violation of the conditions works essentially the same for all the above problems, we will perform it for $\mVIP$.

The idea of the proof is that an instance is not satisfiable iff there exists an $x\in\mathbb R ^n$ fulfilling the input specifications so that $N(x)$ does not fulfill the output specifications, meaning at least one of the equations does not hold. We then guess this equation along with some data that narrows down how the computation on such an $x$ could behave.

For a no-instance, let $x$ be an input violating the specifications. The certificate for the instance not to be solvable is not $x$ itself, because we would need to guarantee that it has a short representation. Instead, the certificate provides the information for every computation node, in which of the linear intervals its input is (for ReLU-nodes for example whether it returns zero or a positive value) when the network computes $N(x)$ together with the index of the violated output specification. This leaves us with a system like Linear Programming of linear inequalities that arise in the following way: 

All of the input specifications $\sum\limits_{i=1}^na_ix_i\leq b$ and the violated output specification $\sum\limits_{i=1}^ma_iN(x)_i> b$ are already linear. For each $\sigma$-node computing
\[y_{{\ell}i} = \sigma (\sum\limits_j c_{ji}^{({\ell}-1)}y_{({\ell}-1)j} + b_{{\ell}i}))\] that is by the certificate on an interval $[\alpha,\beta]$ where $\sigma(x)$ coincides with $a\cdot x +b$, we add the linear conditions 
\[\alpha\leq \sum\limits_j c_{ji}^{({\ell}-1)}y_{({\ell}-1)j} + b_{{\ell}i}\leq\beta\]

and
\[y_{{\ell}i} =a\cdot \sum\limits_j c_{ji}^{({\ell}-1)}y_{({\ell}-1)j} + b_{{\ell}i}+b\]

The only difference to linear programming is that some of the equations are strict and some are not. This decision problem is also well known to still be in P by\cite{JonssonB}, Lemma 14.

The proof works in the same way for the other problems, as the falsehood of their instances can also formulated as a linear programming instance. For network equivalence for example, one would guess the output dimension that is unequal at some point, encode the computation of both networks, and demand that the outputs in the guessed dimensions are not the same, which is a linear inequality.$\hfill\blacksquare$
\end{proof}

The proof fails for non-linear metrics such as $d_2$, for the resulting equation system is not linear any more. 



\begin{lemma}
Let $F$ be a set of activations containing ReLU. For any $\lambda\in[1,\infty]$, there are polynomial time reductions from $d=d_\infty$ or $d=d_1$ to $d=d_\lambda$ for $\CR_d(F), \aCR_d(F)$ and $\SR_d(F)$.

\end{lemma}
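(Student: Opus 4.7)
The plan is to construct, from any instance with source metric $d\in\{d_\infty,d_1\}$, an equivalent instance using metric $d_\lambda$, by wrapping $N$ inside a larger network whose input (and, for $\SR$, output) pre- and post-processing gadgets are built from $ReLU$ nodes. The key building block I will use throughout is coordinate-wise clipping $\mathrm{clip}_{[a,b]}(t)=\max(a,\min(b,t))$, which is piecewise linear and realizable by a constant-size $ReLU$ sub-network.

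For the $d_\infty$-source I would set $\bar x':=0$, pick $\varepsilon':=\varepsilon\cdot n$ (a convenient rational upper bound on the possibly irrational $\varepsilon n^{1/\lambda}$), and take $N'(z):=N(\bar x+\mathrm{clip}_{[-\varepsilon,\varepsilon]}(z))$. The verification reduces to checking that coordinate-wise clipping, when restricted to $B_{\varepsilon',d_\lambda}(0)$, is a surjection onto $[-\varepsilon,\varepsilon]^n=B_{\varepsilon,d_\infty}(0)$: the image lies in the cube by construction, while every $y$ in the cube satisfies $\|y\|_\lambda\le\varepsilon n^{1/\lambda}\le\varepsilon'$ and so is its own preimage. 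This already gives the reduction for $\CR$ and, quantifying existentially over the classification index $j$, for $\aCR$. For the $d_1$-source I would chain the cube-clip with a further budgeted $d_1$-clip $P_1$: initialize $r_0:=\varepsilon$ and, for $i=1,\dots,n$, set $x'_i:=\mathrm{clip}_{[-r_{i-1},r_{i-1}]}(y_i)$ and $r_i:=r_{i-1}-|x'_i|$, expressing $|t|$ as $ReLU(t)+ReLU(-t)$. A short induction shows $r_i\ge0$ and hence $\|x'\|_1=\varepsilon-r_n\le\varepsilon$; conversely, any $a\in B_{\varepsilon,d_1}(0)$ is its own preimage since the successive budgets never bind. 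Composing this with the cube-clip yields the required surjection onto $B_{\varepsilon,d_1}(0)$.

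For $\SR$ the output metric also matters, so I would additionally append a post-processing gadget that, in parallel, runs $N$ on the preprocessed perturbation and a frozen copy of $N$ on the constant input $\bar x$, takes the coordinate-wise difference, applies $|\cdot|$ by the same $ReLU$ formula, and aggregates with a $ReLU$-expressible max (for $d=d_\infty$) or sum (for $d=d_1$). This collapses $N'$ to a single output coordinate with $N'(\bar x')=0$, so $d_\lambda(N'(z),0)=|N'(z)|$ coincides on the nose with $\|N(x)-N(\bar x)\|_\infty$ (resp.\ $\|\cdot\|_1$); choosing $\delta':=\delta$ then preserves the robustness question.

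The main obstacle I anticipate is verifying exact surjectivity of the two input gadgets, since both directions of the equivalence rest on it; this is handled by the twin observations that the clip image lies in the target ball and that every point of the target ball is its own preimage. Everything else is routine bookkeeping: the gadgets have size $O(n+m)$ and, together with at most two copies of $N$, produce a network of polynomial size; all activations lie in $F$ because $F\supseteq\{ReLU\}$; and the generous rational choice $\varepsilon'=\varepsilon n$ dodges the irrationality of $\varepsilon n^{1/\lambda}$, so the whole reduction is computable in polynomial time.
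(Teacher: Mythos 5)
Your proof follows essentially the same route as the paper's: precompose $N$ with a ReLU-computable retraction of a $d_\lambda$-ball onto the source ball (your budgeted $d_1$-clip is exactly the paper's transformation $T$), and for $\SR$ collapse the output to a single dimension so that the choice of output metric becomes irrelevant. If anything, you are more explicit than the paper in two places it glosses over, namely the enlarged rational radius $\varepsilon'=\varepsilon n$ needed for the $d_\infty$ source and the subtraction of the frozen value $N(\bar x)$ before aggregating the output in the $\SR$ case.
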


\begin{proof} We will show how to reduce from $d_1$ for a network $N$ by constructing a network $N'$ that shows the same behavior in $\mathbb B_{\varepsilon,1}(\bar x)$ as the given network and a sufficiently similar behavior on $\mathbb B_{\varepsilon,\lambda}(\bar x)$. The proof for $d_\infty$ is analogous, as it only relies on the $\varepsilon$-ball to be a convex semi-linear set.

The idea for all three problems is to first retract a superset of the $\varepsilon$-ball to it so that $N'=N\circ T$, where $T(B_{\varepsilon,\lambda}(\bar x))=B_{\varepsilon,1}(\bar x)$.

Let $(N,\varepsilon,\bar x,j)$ be an instance of $\CR_1(F)$. The equivalent instance $(N',\varepsilon,\bar x,j)$ of $\CR_\lambda(F)$ is constructed as follows:

We first insert $n$ new layers before the initial input layer, the first of them becomes the new input layer. These layers perform a transformation $T$ that has the following properties:

$\forall x\in\mathbb B_{\varepsilon,1}(\bar x): T(x)=x$. This ensures that if the initial instance failed the property $\CR_1(F)$, the new one fails $\CR_\infty(F)$.

$\forall x\in\mathbb B_{\varepsilon,\lambda}(\bar x): T(x)\in \mathbb B_{\varepsilon,1}(\bar x)$. This ensures that if the initial instance fulfilled the property $\CR_1(F)$, the new one fulfills $\CR_\infty(F)$. 

Such a function is for example \begin{align*}
T(x)_i=&ReLU(x_i)-ReLU(x_i+\sum\limits_{j=1}^{i-1}\vert T(x)_j\vert -\varepsilon)\\
-&(ReLU(-x_i)+ReLU(-x_i+\sum\limits_{j=1}^{i-1}\vert T(x)_j\vert -\varepsilon))
\end{align*}

because $\mathbb B_{\varepsilon,1}(\bar x)\subseteq\mathbb B_{\varepsilon,\lambda}(\bar x)$ and $T(x)\in\mathbb B_{\varepsilon,1}(\bar x) \forall x\in\mathbb R^n$.

For $\CR$ the reduction is complete, for $\aCR$ it is essentially the same. For $\SR_\lambda$ we perform as one further step a transformation $P$ of $N(x)$ into a one-dimensional output, namely $P(N(x))=\sum\limits_{i=1}^m\vert N(x)_i\vert$. It is easily seen that $N(x)\in\mathbb B_1(\varepsilon)\Leftrightarrow P(N(x))\in\mathbb B_1(\varepsilon)=[0,\varepsilon]$.
$\hfill\blacksquare$
\end{proof}

The previous Theorem as well as the following Proposition rely on the semi-linearity of the final sets, a crucial property when using the Ellipsoid Method or the ReLU function. When . This is the reason why we are restricted to metrics where balls are semi-linear sets. When moving on to $d_\lambda$ for arbitrary $\lambda$, we cross the presumably large gap in computational complexity between $NP$ and $\exists\mathbb R$.

\begin{proposition}\label{P}
Let $\lambda\in\{1,\infty\}$. Then $\mVIP(\{id\})$, $\mNNR(id)$, $\mNE(id)$,\\
$\SR_\lambda(\{id\})$, $\CR_\lambda(\{id\})$, $\aCR_\lambda(\{id\})$ and $\LR_\lambda(\{id\})$ are in P.
\end{proposition}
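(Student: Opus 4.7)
My plan rests on the single observation that a feedforward network $N$ whose only activation is the identity computes an \emph{affine} function $N(x)=Wx+c$; I would first show that the pair $(W,c)$ can be produced in polynomial time by successively composing the per-layer affine maps $y_\ell = W_\ell y_{\ell-1} + b_\ell$. The standard analysis of iterated matrix multiplication over $\Q$ shows that the bit-lengths of the entries of $W$ and $c$ stay polynomial in the network size. From this point on, each of the seven problems will reduce to a constant or linear number of linear-programming subtasks, all solvable in P via the Ellipsoid Method.

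For the three non-metric problems I would argue as follows: $\mNNR(\{id\})$ becomes the LP-feasibility question $\{x\mid Ax\le a,\; BWx\le b-Bc\}$; $\mVIP(\{id\})$ holds iff for every row $i$ of the output specification one has $\max\{(BW)_i x\mid Ax\le a\}\le (b-Bc)_i$, a linear number of LPs; and $\mNE(\{id\})$ reduces to computing the affine representations $(W_1,c_1),(W_2,c_2)$ of both networks and testing componentwise equality of the rationals involved.

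For the metric problems I would exploit the fact that for $\lambda\in\{1,\infty\}$ the operator norm $\|W\|_{\lambda\to\lambda}$ coincides with either the maximum absolute row sum ($\lambda=\infty$) or column sum ($\lambda=1$), both read off directly from $W$. Using $\sup_{\|y\|_\lambda\le\varepsilon}\|Wy\|_\lambda=\varepsilon\,\|W\|_{\lambda\to\lambda}$, $\SR_\lambda(\{id\})$ reduces to the arithmetic test $\varepsilon\|W\|_{\lambda\to\lambda}\le\delta$; by positive homogeneity $\LR_\lambda(\{id\})$ reduces to $\|W\|_{\lambda\to\lambda}\le L$ (the dependence on $\varepsilon$ drops out). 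For $\CR_\lambda(\{id\})$, writing $N_j(x)-N_i(x)=(W_j-W_i)x+(c_j-c_i)$ and minimising over $\|x-\bar x\|_\lambda\le\varepsilon$ by Hölder duality gives the criterion
\[
(W_j-W_i)\bar x+(c_j-c_i)\ \ge\ \varepsilon\,\|W_j-W_i\|_{\lambda^*}\quad\text{for every }i\ne j,
\]
where $\lambda^*\in\{1,\infty\}$ is the Hölder conjugate. For $\aCR_\lambda(\{id\})$ one loops over the $m$ candidate classes $j$ and invokes the previous test.

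The main obstacle I anticipate is purely bookkeeping: verifying that the explicit construction of $(W,c)$ by chain multiplication of the layer matrices does not blow up bit-lengths (the classical polynomial bound over $\Q$), and cleanly handling degenerate parameter values such as $\varepsilon=0$ or $\varepsilon,\delta,L=\infty$, which either trivialise the corresponding condition or reduce it to a single test like $W=0$. There is no conceptual hurdle beyond these routine checks, because in the identity-only setting the nonlinearities that force \NNR\ and its relatives into higher complexity classes are absent.
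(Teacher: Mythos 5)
Your proposal is correct, and for several of the subproblems it takes a genuinely different route from the paper. The paper's proof is uniform and very short: $\mNNR(\{id\})$ is immediately an LPF instance, and for every other problem the \emph{complement} is expressed as a disjunction of linearly many LP instances (one per violated output row, output dimension, or pair of dimensions), each decidable in P even with strict inequalities; the network computation is folded into the constraints directly, so no explicit affine normal form is ever computed. You instead first extract the affine map $(W,c)$ by composing the layer maps, and then treat the problems heterogeneously: $\mNNR$ and $\mVIP$ still go through LPs, but $\mNE$ becomes componentwise equality of $(W_1,c_1)$ and $(W_2,c_2)$, and $\SR_\lambda$, $\LR_\lambda$, $\CR_\lambda$, $\aCR_\lambda$ become closed-form arithmetic tests via the exactly computable operator norms $\|W\|_{\infty\to\infty}$ (max absolute row sum), $\|W\|_{1\to 1}$ (max absolute column sum), and H\"older duality for minimizing a linear functional over a $\lambda$-ball. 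What your route buys is LP-free, strongly explicit algorithms for the metric problems and a cleaner picture of \emph{why} they are easy in the affine case; what it costs is the extra bookkeeping you correctly flag (polynomial bit-length of iterated matrix products, degenerate parameters $\varepsilon=0$ or $\varepsilon,\delta,L=\infty$, and the tie-breaking convention in $\arg\max$ for $\CR$), all of which the paper's constraint-based encoding sidesteps by never leaving the LP formalism. Both arguments are sound; yours is less uniform but more informative.
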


\begin{proof}
$\NNR(id)$ is a special case of Linear Programming Feasibility. For the other problems, we argue that their complements are in P=co-P. Note that LPF is still in P if we allow strict linear inequalities. The complements of $\VIP(\{id\})$, $\NE(id)$, $\SR_\lambda(\{id\})$, $\CR_\lambda(\{id\})$, $\aCR_\lambda(\{id\})$ and $\LR_\lambda(\{id\})$ are global disjunctions of LP-instances of linear size, so all of these Problems are in P.$\hfill\blacksquare$
\end{proof}

The following Lemma proved in \cite{adrian} allows us to strengthen a couple of results on complexities in the semi-linear range of network verification.

\begin{lemma}
An identity node can be expressed by two ReLU nodes without changing the computed function.
\end{lemma}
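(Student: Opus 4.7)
The plan rests on the elementary identity
\[
x \;=\; \mathrm{ReLU}(x) - \mathrm{ReLU}(-x),
\]
which holds for every $x\in\mathbb R$ since for $x\geq 0$ the right-hand side equals $x-0$ and for $x<0$ it equals $0-(-x)$. The task is then simply to implement this identity inside the layered architecture from the definition of a neural network.

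Concretely, suppose the identity node in question sits at position $i$ in layer $\ell$, so that it computes
\[
y_{\ell i} \;=\; z, \qquad z := \sum_j c_{ji}^{(\ell-1)} y_{(\ell-1)j} + b_{\ell i}.
\]
I would replace this single node by two ReLU-nodes $u$ and $v$ in the same layer $\ell$. The node $u$ inherits the incoming weights $c_{ji}^{(\ell-1)}$ and bias $b_{\ell i}$ unchanged and therefore outputs $\mathrm{ReLU}(z)$. The node $v$ receives the negated incoming weights $-c_{ji}^{(\ell-1)}$ and negated bias $-b_{\ell i}$, so it outputs $\mathrm{ReLU}(-z)$. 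Everything else in the network is unchanged, except that wherever a subsequent node in layer $\ell+1$ previously consumed $y_{\ell i}$ with some weight $w$, it now consumes $u$ with weight $w$ and $v$ with weight $-w$. By the identity above the linear combination $w\cdot u + (-w)\cdot v$ equals $w\cdot z = w\cdot y_{\ell i}$, so every downstream pre-activation is preserved exactly; the overall function computed by the network is therefore unchanged.

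The only thing to watch is that this substitution stays inside the feedforward framework of the paper: both new ReLU-nodes live in the same layer $\ell$ and feed into layer $\ell+1$ exactly where the replaced node did, so the layered structure and the formal definition of a computation node are respected, and the bit-size of the construction grows only by a constant factor (the negated weights have the same bit-length as the originals). There is no real obstacle here — the proof is essentially the verification that $\mathrm{ReLU}(z)-\mathrm{ReLU}(-z)=z$ and the bookkeeping of weights in the successor layer — but it is precisely this observation that will let later reductions move freely between networks containing identity nodes and pure ReLU networks.
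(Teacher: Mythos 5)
Your proof is correct and is precisely the standard argument the paper has in mind; the paper itself does not reprove this lemma but imports it from \cite{adrian}, and the decomposition $x=ReLU(x)-ReLU(-x)$ together with the weight bookkeeping in the successor layer (feeding $u$ with weight $w$ and $v$ with weight $-w$) is exactly that construction. The one implicit restriction, already glossed over in the lemma statement itself, is that the replaced node must have a successor layer to absorb the recombination, i.e.\ the substitution applies to hidden identity nodes (a pure ReLU output node can only emit non-negative values, so an identity node in the output layer cannot be eliminated this way).
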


The Lemma looks somewhat trivial, however for all our decision problems it shows that adding $id$ to the set $F$ of allowed activations already containing ReLU does not change the computational complexity. Furthermore, when reducing one such problem to another one, we can assume $id$ to be among the activations and freely use it in the reductions, as for example in the next Lemma.

\begin{lemma}Let $F$ be a set of activation functions.

i) $\aCR_d(F)$ is polynomial time reducible to $\CR_d(F)$.

ii) If $d$ is induced by a $p$-norm and if $ReLU\in F$, then $\CR_d(F)$  is polynomial time reducible to $\aCR_d(F)$.

\end{lemma}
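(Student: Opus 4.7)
My plan for part (i) is a polynomial-time Turing reduction. By the very definition, $\aCR_d(N,\varepsilon,\bar x)$ holds iff $\CR_d(N,\varepsilon,\bar x,j)$ holds for at least one output dimension $j \in \{1,\ldots,m\}$, where $m$ is the number of output nodes. The reduction simply queries the $\CR_d$ oracle on the $m$ instances $(N,\varepsilon,\bar x,j)$ for $j=1,\ldots,m$ and returns YES iff at least one call does.

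For part (ii), my plan is a many-one reduction. Given $(N,\varepsilon,\bar x,j)$, I construct a network $N'$ on input space $\mathbb{R}^{n+1}$ with two output nodes, and show that $(N',\varepsilon,(\bar x,0))$ is an equivalent $\aCR_d$ instance. The central gadget is the margin function $D(x) := N_j(x)-\max_{k\neq j}N_k(x)$, which by the identity $\max(a,b)=ReLU(a-b)+b$ is computable by a subnetwork using $N$ plus $O(m)$ extra ReLU nodes. Note that $\CR_d(N,\varepsilon,\bar x,j)$ holds iff $D(x)>0$ on the entire $\varepsilon$-ball. I then define
\[
N'_1(x,x_{n+1}) = D(x) + C\cdot(ReLU(x_{n+1}) + ReLU(-x_{n+1})), \qquad N'_2(x,x_{n+1}) = 0
\]
for a positive constant $C$, so that the added term equals $C|x_{n+1}|$ and vanishes exactly at $x_{n+1}=0$.

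The correctness argument goes as follows. The hyperplane $x_{n+1}=0$ intersects the new $p$-norm ball around $(\bar x,0)$ exactly in the original $\varepsilon$-ball in $x$, so requiring the argmax of $N'(\cdot,0)$ to be class $1$ on that slice is equivalent to $D(x)>0$ there, i.e.\ to $\CR_d$. Off the slice, whenever $D(x)>0$ holds on the $x$-projection we also have $N'_1\geq D(x)>0$ by the boost, so the argmax remains class $1$. Finally, if $C$ is chosen larger than $|D(\bar x)|/\varepsilon$, the values $N'_1(\bar x,\pm\varepsilon)=D(\bar x)+C\varepsilon$ are positive even when $D(\bar x)$ is negative, which rules out class $2$ being a constant argmax. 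Combining these observations yields $\aCR_d(N',\varepsilon,(\bar x,0)) \iff \CR_d(N,\varepsilon,\bar x,j)$.

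The main obstacle I anticipate is ensuring $C$ can be encoded with polynomial bit-size. For networks using only ReLU, or more generally for any $F$ whose activations exhibit at most polynomial growth at rational inputs, a routine layer-by-layer estimate bounds $|D(\bar x)|$ polynomially in the bit-size of the weights of $N$ and of $\bar x$, so a suitable rational $C$ of polynomial bit-size is available. Degenerate ties at $D(x)=0$ can be avoided by a small generic perturbation of the weights of $N'$ that preserves the equivalences above.
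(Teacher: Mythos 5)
Part (i) is essentially the paper's own argument: test $\CR_d(N,\varepsilon,\bar x,j)$ for each of the $m$ output dimensions and accept iff one test succeeds (a Turing reduction, as in the paper).

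For part (ii) your gadget is genuinely different from the paper's, and its case analysis is sound as far as it goes: the slice $x_{n+1}=0$ of the lifted ball reproduces the original ball, a witness $x^*$ with $D(x^*)<0$ disqualifies class $1$ at $(x^*,0)$, and the boost $C|x_{n+1}|$ disqualifies class $2$ at $(\bar x,\pm\varepsilon)$. The paper instead keeps the input space fixed, uses three output dimensions $\bigl(0,\;g(x)-\frac{2}{\delta}ReLU(x_1-\bar x_1),\;g(x)-\frac{2}{\delta}ReLU(\bar x_1-x_1)\bigr)$ with $g=f-ReLU(f-1)$ and $f(x)=\sum_i ReLU(N(x)_i-N(x)_j)$, and perturbs the existing coordinate $x_1$ by $\pm\delta<\varepsilon$ to rule out dimensions $2$ and $3$. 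The decisive difference is the clipping step $g=\min(f,1)$: since $g\in[0,1]$, the coefficient $\frac{2}{\delta}$ needed to push the competing dimensions below zero depends only on $\varepsilon$, never on the magnitude of the network's outputs. Your reduction instead requires $C>|D(\bar x)|/\varepsilon$, and a polynomial-bit-size bound on $|D(\bar x)|$ is available only for activation sets of controlled growth; you flag this yourself, but the lemma is stated for arbitrary $F$ containing $ReLU$, so as written your proof covers only that subcase. The repair is exactly the paper's trick: clip $D$ into $[-1,1]$ with two further $ReLU$ stages before adding the boost, after which $C=2/\varepsilon$ always suffices. Two smaller points: your closing appeal to a ``generic perturbation of the weights'' to break ties is both unjustified (perturbing $N'$ can change whether $\CR_d$ holds) and unnecessary under the paper's reading of $\arg\max_i N_i(x)=j$ as ``$j$ attains the maximum,'' under which your equivalences go through verbatim with $D\geq 0$ in place of $D>0$; and you should note explicitly that computing $\max_{k\neq j}N_k$ needs identity wires (or the $ReLU(b)-ReLU(-b)$ simulation of them), which is harmless since $ReLU\in F$.
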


\begin{proof}
i) To see if $aCR_d(N,\varepsilon,\bar x)$ holds, check $\CR_d(N,\varepsilon,\bar x,j)$ for all possible $j$ and see if one of them holds. 

ii) For the reverse direction, let $(N,\varepsilon,\bar x,j)$ be an instance of $\CR_d(F)$. We will construct an instance $(N_h,\varepsilon,\bar x)$ of $aCR$ so that 
\[\CR_d(N,\varepsilon,\bar x,j)\Leftrightarrow \aCR_d(N_h,\varepsilon,\bar x)\]
The idea is to apply a function $f$ on the output that collects any possible overlap of all dimensions other than $j$. This function is then modified to a function $h$ with 3 output dimensions described by a network $N_h$, where one dimension is the constant zero function, and it is guaranteed to be negative in both remaining dimensions at some point. Also, iff $f$ was at some point positive, which happens exactly if there was an overlap, so should at least one component of $h$ be. Since $h$ is at some point negative in the second and third dimension, the only candidate for the pointwise biggest output dimension is the first, and this is the case iff there was an overlap in the output of the initial network. This means that $\CR_d(N,\varepsilon,\bar x,j)\Leftrightarrow\CR_d(N_h,\varepsilon,\bar x,1)\Leftrightarrow\aCR_d(N_h,\varepsilon,\bar x)$. It remains to show that such functions $f,h$ exist and that they can be described by a network $N_h$. 

Note that the following functions can be computed by neural networks that are modifications of $N$: 
\[f(x)=\sum\limits_{i=1}^m ReLU(N(x)_i-N(x)_j)\]
\[g(x)=f(x)-ReLU(f(x)-1)\]

Note that $f(x)\geq0$ and $f(x)=0$ iff $j$ is the index for a maximal component. Furthermore if $f(x)\geq1$, then $g(x)=1$ and if $f(x)\in[0,1]$ then $g(x)=f(x)$. Thus, $g\equiv0$ iff $\CR_d(N,\varepsilon,\bar x,j)$. Now define $h$ as

\[h(x)=(0,g(x)-\frac2\delta ReLU(x_1-\bar x_1),g(x)-\frac2\delta ReLU(\bar x_1-x_1)).\]
Where $\delta\in\mathbb Q $ is chosen so that $\delta<\varepsilon$, meaning $\bar x+\delta e_1, \bar x-\delta e_1\in\mathbb B_{\varepsilon,d}(\bar x)$ where $e_1=(1,0,...,0)$.

Let $N_h$ be the network computing $h$. When asked if $aCR_\lambda(N_h,\varepsilon,\bar x)$, the second and third dimensions of $h$ cannot be the pointwise biggest, the only possible candidate is 0 (j=1) because $h_2(\bar x+\delta e_1),h_3(\bar x-\delta e_1)\leq -1$. Also, $g$ is constantly 0 iff $\CR_\lambda(N,\varepsilon,\bar x,j)$ holds, on the other hand $g\equiv 0$ holds iff $aCR_\lambda(N_h,\varepsilon,\bar x)$, so $\CR_\lambda(N,\varepsilon,\bar x,j)\Leftrightarrow aCR_\lambda(N_h,\varepsilon,\bar x)$.$\hfill\blacksquare$

\end{proof}

\begin{theorem}\label{SRCRVIP} Let $F$ be a set of activation functions. We have that

i) $\SR_\infty$($F$) is linear time reducible to $\VIP(F)$,

ii) $\CR_\infty$($F$) is linear time reducible to $\VIP(F)$,

iii) if $id\in F$, then $\SR_\infty$($F$) is linear time reducible to $\CR_\infty(F$),

iv) if $ReLU\in F$, then $\CR_d(F$) is linear time reducible to $\SR_d(F$) for any metric $d$.

\end{theorem}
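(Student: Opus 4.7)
For parts (i) and (ii) the plan is a direct encoding as a $\mVIP$ instance. In both cases the input specification $A$ is the polytope $\{x\in\mathbb R^n : \bar x_i-\varepsilon \le x_i \le \bar x_i+\varepsilon\}$, which is exactly the $d_\infty$-ball around $\bar x$. For (ii) the output specification is $\{y : y_j \ge y_i \text{ for all } i\neq j\}$, capturing ``$j$ is a maximizer'', so the instance $(N,A,B)$ is equivalent to $\CR_\infty(N,\varepsilon,\bar x,j)$. For (i), the SR predicate $|N(x)_i - N(\bar x)_i|\le\delta$ references the potentially irrational value $N(\bar x)$; to keep rational LP data I would replace $N$ by an augmented network $N'$ that runs $N$ twice in parallel, once on the genuine input $x$ and once on a hard-coded copy of $\bar x$ (fed in through a bias-only layer routed via identity, which is available by the paper's convention), exposing both outputs as the first $m$ and last $m$ coordinates. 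The output specification then becomes the linear system $-\delta \le y_i - y_{m+i} \le \delta$ for $i=1,\dots,m$.

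For part (iii), which reduces $\SR_\infty$ to $\CR_\infty$, the strategy is to encode the SR condition as an argmax property. Given an instance $(N,\varepsilon,\delta,\bar x)$, I would build $N'$ with $2m+1$ outputs: the first is identically $0$, and for each $i\in\{1,\dots,m\}$ two further outputs compute $N(x)_i - N(\bar x)_i - \delta$ and $N(\bar x)_i - N(x)_i - \delta$, where $N(\bar x)$ comes from a parallel sub-network that feeds the hard-coded constant $\bar x$ through $N$ (using $id\in F$ to carry the constant through the input layer and then reusing the weights of $N$). At any input $x$, dimension $1$ is a maximizer of $N'(x)$ iff each of the other $2m$ dimensions is $\le 0$, equivalently $|N(x)_i - N(\bar x)_i|\le\delta$ for every $i$, which is exactly the SR condition at $x$. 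Setting $\bar x' = \bar x$, $\varepsilon' = \varepsilon$ and target class $1$ then yields $\SR_\infty(N,\varepsilon,\delta,\bar x) \Leftrightarrow \CR_\infty(N',\varepsilon,\bar x,1)$.

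For part (iv) the plan is to collapse the multi-dimensional argmax test into a scalar via ReLU: set
\[
N'(x) \;=\; \sum_{i\neq j} \mathrm{ReLU}\bigl(N(x)_i - N(x)_j\bigr),
\]
$\bar x'=\bar x$, $\varepsilon' = \varepsilon$, and $\delta' = 0$. Then $N'(x)=0$ exactly when $j$ is a maximizer of $N(x)$ and $N'(x)>0$ otherwise, so $\CR_d$ forces $N'\equiv 0$ on the $\varepsilon$-ball and makes $\SR_d$ vacuous; conversely, provided $j$ is already a maximizer at $\bar x$ (so $N'(\bar x)=0$), the condition $\SR_d$ with $\delta'=0$ forces $N'\equiv 0$ on the ball and yields $\CR_d$. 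The step I expect to be the main obstacle is the corner case where $j$ fails to be a maximizer at $\bar x$ itself: then $\CR_d$ trivially fails, but $N'$ could happen to be locally constant (and positive) on the ball, causing $\SR_d$ to hold spuriously. Since $ReLU\in F$, however, a single forward pass on the rational point $\bar x$ computes $N(\bar x)$ in polynomial time, so the cleanest fix is to have the reduction first inspect $\arg\max N(\bar x)$ and, if it does not contain $j$, emit a fixed ``no''-instance of $\SR_d$.
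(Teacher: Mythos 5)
Parts (i)--(iii) match the paper's constructions in substance: (ii) is identical, (iii) is the same two-copies-in-parallel idea (your ``first output $\equiv 0$, others shifted by $-\delta$'' is a cosmetic variant of the paper's ``last output $\equiv\delta$''), and (i) differs only in that you hard-code $\bar x$ through an identity layer whereas the paper pins the first copy's input to $\bar x$ via the \emph{input specification}. Note that (i) does not grant $id\in F$, so as written your version of (i) uses an activation you are not entitled to; this is easily repaired either by absorbing $\bar x$ into the biases of the first genuine layer of that copy or by doing what the paper does and letting the LP input specification fix those coordinates.

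The genuine gap is in (iv). You correctly isolate the corner case ($j$ not a maximizer at $\bar x$, yet $N'$ locally constant and positive, so $\SR_d$ with $\delta'=0$ holds spuriously), but your fix --- evaluate $N(\bar x)$, inspect $\arg\max N(\bar x)$, and emit a fixed no-instance if $j$ is not maximal --- is not available here. The statement is for \emph{any} set $F$ containing $ReLU$, so $N$ may also contain, say, sigmoidal or exponential-type nodes; the paper explicitly stresses (in the discussion of $\aCR$ versus $\CR$) that in the Turing model one cannot in general compute $N(\bar x)$ exactly, nor even decide $N(\bar x)_i\geq N(\bar x)_j$, so a reduction that branches on $\arg\max N(\bar x)$ is not Turing-computable in this generality (and even for $F=\{ReLU\}$, exact forward evaluation is polynomial rather than linear time, against the claimed linear-time bound). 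The paper's way around this is purely syntactic network surgery: it builds a second net $N_f$ computing $f(x)=\sum_i ReLU(x_i-\bar x_i)+ReLU(\bar x_i-x_i)$, which vanishes exactly at $\bar x$, forms $N''=ReLU(N'-N_f)$, and observes that under $\SR_d(N',\varepsilon,0,\bar x)$ one has $N'(\bar x)=0$ iff $\SR_d(N'',\varepsilon,0,\bar x)$; the two SR conditions on $N'$ and $N''$ are then merged into a single $\SR_d$ instance. You need some such evaluation-free gadget to close the corner case.
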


\begin{proof}
i) Let $(N,\varepsilon,\delta,\bar x)$ be an instance of SR$_\infty$($F$). To construct an equivalent instance of $\VIP(F)$, we introduce two copies of the network $N$ and view them as one big network $N'$ with an input dimension twice the input dimension of $N$. We demand by input specification that the input of the first copy of $N$ is $\bar x$ and the input of the second copy is in the $\varepsilon$-ball of $\bar x$ with respect to $\|\cdot\|_\infty$, which is equivalent to $\forall i\in\{1,...,n\}: -\varepsilon\leq \bar x_i-x_i\leq\varepsilon$, a linear equation system that we denote $A$. Similarly, we demand by output specification that $N(x)$ is in the  $\delta$-ball of $N(\bar x)$, which is equivalent to $\forall i\in\{1,...,m\}: -\delta\leq N(\bar x)_i-N(x)_i\leq\delta$, we denote this system $B$. This gives us the $\VIP(F)$-instance $(N',A,B)$ for which we have $\VIP(N',A,B)\Leftrightarrow \SR(N,\varepsilon,\delta,\bar x)$.
 
ii) Let $(N,\varepsilon,\bar x,j)$ be an instance of $\CR_\infty(F)$. We construct an equivalent instance of $\VIP(F)$ as follows. The network remains $N$, the input specifications are $\forall i\in\{1,...,n\}: \bar x_i-\varepsilon\leq x_i\leq\bar x_i+\varepsilon$ and the output specification is $\bigwedge\limits_{i=1}^m N(x)_i \leq N(x)_j$.

iii) Let $(N,\varepsilon,\delta,\bar x)$ be an instance of $\SR_\infty$($F$) with $m$ the output dimension of $N$. To construct an equivalent instance $(N',\varepsilon,\bar x,2m+1)$ of $\CR_\infty(F)$, we perform the computation of two copies of the network $N$ in parrallel, which constitutes a new network $N'$ as follows: 

\begin{figure}

  \centering
    \includegraphics[width=1\textwidth]{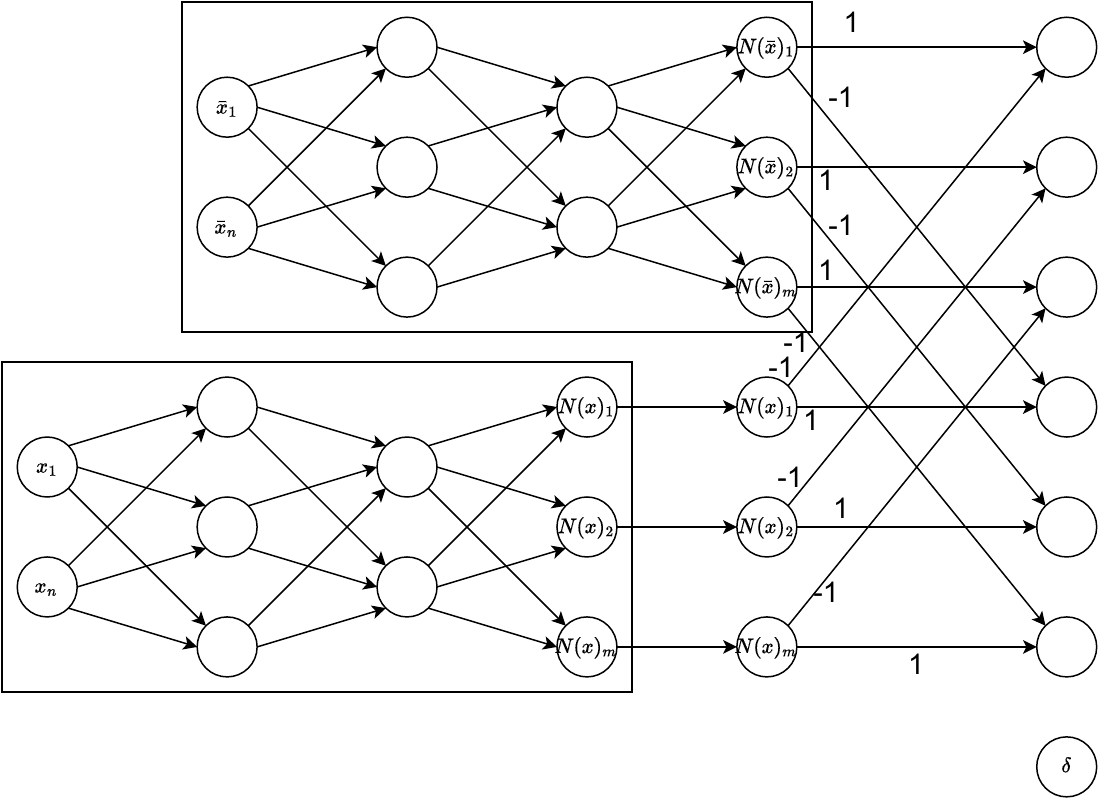}
  \caption{Idea iii)}
\end{figure}

The first copy is shifted by one layer, so the layer that was the input layer is now in the first hidden layer with no incoming connections and the bias of node $i$ is $\bar x_i$, so that first copy just computes the value $N(\bar x)$. The input layer of $N'$ is the input layer of the second copy, so the second copy of $N$ computes $N(x)$. We add a layer of id-nodes at the end of it that does not change the output but let it have the same number of layers as the first copy. We add one layer connecting both copies after the old output layer, this one becomes the new output layer, and it computes
\[(N(x)_1-N(\bar x)_1,...,N(x)_m-N(\bar x)_m, N(\bar x)_1-N(x)_1,...,N(\bar x)_m-N(x)_m,\delta).\]
The node corresponding to the last value $N'(x)_{2m+1}=\delta$ has no incoming connections and bias $\delta$. We have that $\forall i\in\{1,...,m\} : \vert N(\bar x)_i-N(x)_i\vert\leq\delta$ iff $\forall i\in\{1,...,m\} : N(\bar x)_i-N(x)_i\leq\delta\land N(x)_i-N(\bar x)_i\leq\delta$, so the two instances are in fact equivalent.

iv) Let $(N,\varepsilon,\bar x,j)$ be an instance of $\CR_d(F)$. We construct an equivalent instance $(N',\varepsilon,0,\bar x)$ of $\SR_d(F)$ as follows: We add two additional layers after the old output layer. The first one computes $\alpha_i=ReLU(N(x)_i-N(x)_j)$ for all $i$, these values are all 0 everywhere on $B_{d,\varepsilon}(\bar x)$ iff $\CR_d(N,\varepsilon,\bar x,j)$ holds. The next added layer computes $\beta=ReLU(\sum\limits_{i=1}^m\alpha_i)$, this value is 0 everywhere on $B_{d,\varepsilon}(\bar x)$ iff $\CR_d(N,\varepsilon,\bar x,j)$ holds. Now we have that $\CR_d(N,\varepsilon,\bar x,j)\Leftrightarrow\SR_d(N',\varepsilon,0,\bar x)\land N'(\bar x)=0$, so it remains to encode that last condition as a $\SR_d(F)$-instance. The function 
\[f(x):=\sum\limits_{i=1}^nReLU(x_i-\bar x_i)+ReLU(\bar x_i-x_i)\]
is 0 in $\bar x$ and strictly positive everywhere else and there exists an $F$-network $N_f$  computing $f$. The networks $N'$ and $N_f$ can be combined to an $F$-network $N''$ computing $g(x)=ReLU(N'-N_f)(x)$. If $\SR_d(N',\varepsilon,0,\bar x)$ holds, then 
\[N'(\bar x)=0\Leftrightarrow g(x)=0 \forall x\in B_\varepsilon(\bar x)\Leftrightarrow \SR_d(N'',\varepsilon,0,\bar x).\]
The conditions $\SR_d(N',\varepsilon,0,\bar x)$ and $\SR_d(N'',\varepsilon,0,\bar x)$ can be formalized as a single $\SR_d$ instance by merging both nets $N'$ and $N''$ into a single larger one, so the reduction is in fact many-one.$\hfill\blacksquare$
\end{proof}

\begin{proposition}\label{NECR} Let $F$ be a set of activation functions. 
If $id\in F$, then $\NE(F)$ is linear time reducible to $\CR_d(F)$ for any metric $d$.
\end{proposition}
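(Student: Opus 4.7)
The plan is to exploit the fact that the definition of $\CR$ explicitly permits $\varepsilon=\infty$, in which case the $\varepsilon$-ball $B_{d,\varepsilon}(\bar x)$ equals all of $\mathbb R^n$ and $\CR$ becomes a global property, matching the global character of $\NE$.

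Given an instance $(N_1,N_2)$ of $\NE(F)$ where the two networks share input dimension $n$ and output dimension $m$, I would build an $F$-network $N'$ of output dimension $2m+1$ that on input $x$ returns
\[
\bigl(N_1(x)_1-N_2(x)_1,\ldots,N_1(x)_m-N_2(x)_m,\;N_2(x)_1-N_1(x)_1,\ldots,N_2(x)_m-N_1(x)_m,\;0\bigr).
\]
Concretely, $N'$ runs $N_1$ and $N_2$ in parallel above a shared input layer; if the two subnets differ in depth, the shallower one is padded with layers of $id$-nodes until both reach the same depth, which is legal because $id\in F$. A final layer of $id$-nodes then takes the appropriate $\pm 1$-weighted linear combinations of the output layers of $N_1$ and $N_2$ (now hidden in $N'$) to produce the first $2m$ coordinates, and the last coordinate is realised by a single $id$-node with no incoming edges and bias $0$. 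Every layer added has size linear in that of $N_1$ and $N_2$, so the construction runs in linear time.

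The reduction then outputs the instance $(N',\infty,\bar x,2m+1)$ of $\CR_d(F)$ for any fixed rational $\bar x$ (say $\bar x=0$). Under the non-strict argmax convention consistent with the rest of the paper (compare the output specification $\bigwedge_{i}N(x)_i\le N(x)_j$ appearing in the proof of Theorem~\ref{SRCRVIP}(ii)), the index $2m+1$ is an argmax of $N'(x)$ iff all other coordinates are $\le 0$; since for each $i$ exactly one of $N_1(x)_i-N_2(x)_i$ and $N_2(x)_i-N_1(x)_i$ is non-negative, this is in turn equivalent to $N_1(x)_i=N_2(x)_i$ for every $i$. With $\varepsilon=\infty$ the ball is all of $\mathbb R^n$, so $\CR_d(N',\infty,\bar x,2m+1)\Leftrightarrow N_1\equiv N_2$, establishing the reduction.

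The main conceptual obstacle is the local-versus-global mismatch between $\CR$ and $\NE$, which dissolves once one notices that $\varepsilon=\infty$ is admitted by the definition. The only remaining subtlety is tie-breaking in the argmax: the construction genuinely requires the non-strict reading, since on yes-instances of $\NE$ every coordinate of $N'(x)$ equals $0$ simultaneously, so that $2m+1$ is only one of several maxima. The paper's earlier usage resolves this in favour of the non-strict convention, so the construction goes through unchanged.
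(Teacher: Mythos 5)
Your construction is essentially identical to the paper's: the same merged network with output layer $(N_1(x)_i-N_2(x)_i,\,N_2(x)_i-N_1(x)_i,\,0)$, the same constant-zero $id$-node with no incoming edges, and the same target instance $\CR_d(\cdot,\infty,0,2m+1)$. Your explicit handling of depth-padding and of the non-strict argmax convention are points the paper leaves implicit, but the argument is the same.
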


\begin{proof}
Let $(N,N')$ be an instance of $\NE(F)$. We merge both networks $N$ and $N'$ into one network $N''$ by identifying/contracting their corresponding inputs and subtracting their corresponding outputs both ways to obtain the output layer 
\[(N(x)_1-N'(x)_1,...,N(x)_m-N'(x)_m, N'(x)_1-N(x)_1,...,N'(x)_m-N(x)_m,0).\] 
The last output node is always 0, this can be forced by introducing an id-node that has bias 0 and no incoming connections. Now 
\[\NE(N,N')\Leftrightarrow\CR_d(N'',\infty,0,2m+1),\]
which concludes the proof. $\hfill\blacksquare$

\end{proof}

Note that for Lipschitz robustness and standard robustness also global variants make sense: 

\begin{definition} Let $N$  be a NN, $\bar x\in\mathbb R^n$ and $\varepsilon,\delta\in\mathbb Q_{\geq0}$. We say that $N$ has (global) $\varepsilon$-$\delta$-\emph{standard robustness} wrt the metric $d$ iff it has $\varepsilon$-$\delta$-standard robustness in every $\bar x\in\mathbb R^n$:
\[\GSR_d(N,\varepsilon,\delta):=\forall  x,\bar x: d(x-\bar x\vert)\leq\varepsilon\Rightarrow d(N(\bar x)-N(x))\leq\delta.\]

We define $\GLR_d$ analogously, note that it coincides with $L$-Lipschitz continuity independent of $\varepsilon$.
\end{definition}

This equivalence does not hold for local Lipschitz robustness: The function $x\cdot\chi_{\mathbb Q}(x)$ is $\varepsilon$-$1$-Lipschitz robust in 0 for every $\varepsilon$, but not Lipschitz-continuous in any $\varepsilon$-ball around 0.

\begin{remark}
Global arbitrary $\varepsilon$-\emph{classification robustness} $\GACR_d$ for a network $N$, meaning it has arbitrary $\varepsilon$-classification robustness in every input $x\in\R^n$, hardly ever happens at all. We have that only very artificial functions map inputs to different classes and still have global (arbitrary) classification robustness, for between two areas of different classification, there must be a dividing strip of diameter $\varepsilon$ in which both classifications must be choosable. The function $f:\mathbb R\rightarrow \mathbb R^2 $ where
\[f=\begin{pmatrix}
f_1\\
f_2
\end{pmatrix} \text{ where } f_1(x)= \left\{\begin{array}{ll}
        0, & \text{for } x\leq -1\\
        x+1, & \text{for } -1\leq x\leq -\frac12\\
        \frac12, & \text{for } -\frac12\leq x\leq \frac12\\
        x-1, \ \ \ & \text{for } \frac12\leq x\leq 1\\
        1, & \text{for } 1\leq x
        \end{array}\right. , \ \ \ \ \ f_2(x)=1-f_1(x)
  \]
for example is globally 1-classification robust. Such functions however are unlikely for a network of "natural origin" for these areas of equal classification do not occur in usual training methods.

\end{remark}

\begin{theorem}\label{SRGNE}
Let $F$ be a set of activation functions containing $ReLU$. Then:

i) $\GSR_\infty(F)$ reduces to $\NE(F)$.

ii) $\GSR_\infty(F)$, $\LR_\infty(F)$ and $\GLR_\infty(F)$ are co-NP-hard.
\end{theorem}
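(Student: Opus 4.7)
For \textbf{part i)}, the strategy is to build two $F$-networks $M_1, M_2 \colon \mathbb{R}^{2n} \to \mathbb{R}^m$ on a doubled input space (one coordinate block playing the role of $x$, the other of $\bar x$) whose global equivalence captures exactly $\GSR_\infty(N,\varepsilon,\delta)$. The key primitive is the componentwise $d_\infty$-clipping map
\[
T(x,\bar x)_i = x_i - \varepsilon + ReLU(\bar x_i - x_i + \varepsilon) - ReLU(\bar x_i - x_i - \varepsilon),
\]
which is $ReLU$-expressible, always lands in $\mathbb{B}_{\varepsilon,\infty}(x)$, and fixes $\bar x$ whenever $\bar x \in \mathbb{B}_{\varepsilon,\infty}(x)$. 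Setting $M_1(x,\bar x) = N(T(x,\bar x))$ and
\[
M_2(x,\bar x) = N(x) + C_\delta\bigl(N(T(x,\bar x)) - N(x)\bigr),
\]
where $C_\delta(y) = -\delta + ReLU(y+\delta) - ReLU(y-\delta)$ clamps entries componentwise to $[-\delta,\delta]$, one has $M_1(x,\bar x) = M_2(x,\bar x)$ iff $\|N(T(x,\bar x)) - N(x)\|_\infty \leq \delta$. Since $(x, T(x,\bar x))$ traces out exactly the pairs at $d_\infty$-distance $\leq \varepsilon$ as $(x,\bar x)$ runs over $\mathbb{R}^{2n}$, we get $M_1 \equiv M_2$ iff $\GSR_\infty(N,\varepsilon,\delta)$ holds.

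For \textbf{part ii)} I would give one common reduction from $\mNNR(\{ReLU\})$ (NP-hard by Katz et al.) to the complements of all three problems. WLOG restrict the input specification $A$ to a box $[-R,R]^n$ with $R$ of polynomial bit-size (via basic-solution LP bounds). Build the $ReLU$-computable penalty
\[
\phi(x) = \sum_i ReLU(a_i^\top x - b_i) + \sum_j ReLU(c_j^\top N(x) - d_j),
\]
which is nonnegative and vanishes exactly at satisfying inputs, and pick $K = 2^{p(|N|)}$ of polynomial bit-size large enough that in every unsatisfiable instance $K\phi^* \geq 1$ for $\phi^* = \min_x \phi(x)$; then $g(x) = ReLU(1 - K\phi(x))$ is identically $0$ in the unsatisfiable case while $g(x^*) = 1$ at any satisfying $x^*$. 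Extend the domain by a scalar $s$ and take the $ReLU$-computable
\[
N'(x,s) = \min\bigl(g(x), ReLU(s)\bigr) = g(x) - ReLU\bigl(g(x) - ReLU(s)\bigr).
\]
In the satisfiable case the pair $(x^*,0),(x^*,1)$ has $d_\infty$-distance $1$ and outputs $0$ and $1$, witnessing a slope of $1$; in the unsatisfiable case $N' \equiv 0$. Choosing $\varepsilon = 1$, $\delta = 1/2$ yields $\overline{\GSR_\infty}(N',\varepsilon,\delta) \Leftrightarrow \mNNR(N,A,B)$; choosing $L = 1/2$ yields $\overline{\GLR_\infty}(N',L) \Leftrightarrow \mNNR(N,A,B)$; and for $\LR_\infty$, taking $\bar x = 0$, $\varepsilon = R+1$, $L = 1/(R+1)$ forces the slope ratio from $\bar x$ to $(x^*,1) \in \mathbb{B}_{\varepsilon,\infty}(\bar x)$ to exceed $L$ precisely in the satisfiable case.

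The main obstacle lies in the amplification step of part ii): one must quantitatively bound $\phi^* \geq 2^{-\mathrm{poly}(|N|)}$ in every unsatisfiable instance, so that the blow-up constant $K$ admits a polynomial-bit-size description. This is a denominator bound for the rational piecewise-linear minimum of a composition of $ReLU$ layers over a bounded polyhedron, relying on the coercivity of $\phi$ on $[-R,R]^n$ together with vertex-rounding arguments from LP theory; once the bound is in place, all three co-NP-hardness conclusions follow mechanically.
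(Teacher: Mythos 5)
Part i) of your proposal is essentially the paper's own construction: both double the input, use a ReLU gadget to let the second block range exactly over the $d_\infty$-ball of the first (you via a clipping retraction $T$, the paper via a surjective reparametrization $y\mapsto\varepsilon(ReLU(y)-ReLU(y-1)-\tfrac12)$), and both reduce the $\delta$-bound to an equivalence of networks (you against a clamped copy, the paper against the constant-zero network after thresholding with $ReLU(\cdot-\delta)$). Both are correct; the differences are cosmetic.

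Part ii) is where you genuinely diverge. The paper reduces 3-SAT directly: a gadget network whose maximum output is $n$ on satisfiable instances and at most $n-1$ on unsatisfiable ones, with minimum $0$ always, so the satisfiable/unsatisfiable cases are separated by an \emph{explicit unit gap} built into the gadget, and the thresholds $\varepsilon,\delta,L$ can be read off immediately. You instead reduce from $\mNNR(\{ReLU\})$ via a penalty function $\phi$ and then must \emph{manufacture} the gap by amplification, which forces you to prove the quantitative lemma $\phi^*\geq 2^{-\mathrm{poly}(|N|)}$ for unsatisfiable instances. That lemma is true and your sketch points at the right proof (on each activation-pattern cell $\phi$ is affine with polynomial-bit-size coefficients over a polyhedron with polynomial-bit-size description, so each cell minimum is a rational of polynomial bit-size, and a positive global minimum is therefore bounded below by $2^{-\mathrm{poly}}$), but it is real work that you defer rather than carry out, and one must also check the bound over \emph{all} of $\mathbb R^n$, not just the box, since $\GSR$ and $\GLR$ quantify globally --- this still works because the box constraints are themselves penalized inside $\phi$ and the cell-wise argument applies to unbounded cells as well. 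What each approach buys: yours is more modular (any NP-hardness result for $\mNNR$ transfers, and the same gadget $N'(x,s)=\min(g(x),ReLU(s))$ serves all three target problems uniformly, which is elegant), while the paper's is self-contained and avoids the denominator-bound machinery entirely. Your local $\LR_\infty$ instance with $\bar x=0$, $\varepsilon=R+1$, $L=1/(R+1)$ checks out since $d_\infty((x^*,1),(0,0))\leq R<R+1$. If you flesh out the amplification lemma, your argument is a complete and valid alternative proof.
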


\begin{proof} $i)$ We reduce an instance $(N,\varepsilon,\delta)$ of $\GSR_\infty(F)$ to an instance $(N',N'')$ of $\NE(F)$ by using the ReLU-function to check whether certain values are positive or not and by forcing the differences $\|\bar x_i-x_i\|_\infty$ to be smaller than $\varepsilon$, again with the use of ReLU-nodes. We construct the network $N'$ as follows: The input $(x,y)=(x_1,...,x_n,y_1,...,y_n)$ has twice the dimension of the input of $N$. On the first half $(x_1,...,x_n)$ we execute $N$ to obtain $N(x)$. For each of the $y_i$ in the second half, we compute
\[f(y_i)=\varepsilon\cdot(ReLU(y_i)-ReLU(y_i-1)-\frac12)\]
by ReLU-nodes in the first hidden layer, a bias of $-\frac12$ in the second and a weight of $\varepsilon$ in the third. Note that $im(f)=[-\varepsilon,\varepsilon]$, we interpret $f(y_i)$ as $\bar x_i-x_i$. We now execute $N$ again, this time on $(x_1+f(y_1),...,x_n+f(y_n))$ to obtain $N(x+\bar x-x)=N(\bar x)$. We compute both $N(x)-N(\bar x)$ and $N(\bar x)-N(x)$ by ReLU-nodes with bias $-\delta$ to obtain 
\[N'(x,y)=(ReLU(N(\bar x)_1-N(x)_1-\delta),ReLU(N(x)_1-N(\bar x)_1-\delta),...,\]
\[ReLU(N(\bar x)_m-N(x)_m-\delta),ReLU(N(x)_m-N(\bar x)_m-\delta)).\]
$\SR^G_\infty(N,\varepsilon,\delta)$ holds iff $N'$ is constantly 0. The second network $N''$ consists of $2n$ input nodes, no hidden layer and $2m$ output nodes with all weights and biases 0. This network will compute 0 independently of the input, so
\[\SR^G_\infty(N,\varepsilon,\delta)\Leftrightarrow \NE(N',N'').\]

$ii)$ We reduce an instance $\varphi$ of 3-SAT with $n$ clauses to an instance $(N,\infty,n-\frac12)$ of the complement of $\GSR_\infty(\{ReLU\})$. The idea is to associate pairs of literals and their negations with pairs of nodes with the property that at least one of these nodes must have the value zero. We then interpret the zero-valued node as false and the other one, that may have any value in $[0,1]$, as true and replace $\land$ and $\lor$ by gadgets that work with the function $\Psi:\mathbb R\rightarrow[0,1]$ defined as $x\mapsto ReLU(x)-ReLU(x-1)$.
Note that it is easily constructable in a ReLU-network and $\Psi\vert_{[0,1]}=id\vert_{[0,1]}$.

For the reduction, introduce for each variable $a$ in$\varphi$ an input variable $\alpha_1$ and use the first two hidden layers to compute $\alpha_2:=\Psi(\alpha_1)$. Use the third hidden layer to propagate $\alpha_2$ as well as to compute $\lnot \alpha_2:=1-\alpha_2$. In the fourth hidden layer compute $\alpha:=2\cdot ReLU(\alpha_2-\frac12)$ for every atom as well as for the negations. Note that $\alpha=0$ or $\lnot \alpha=0$ and the other one can be anything in $[0,1]$. In the fifth and sixth hidden layer compute for every clause $(a,b,c)$ the value $\Psi(\alpha +\beta+ \gamma)$. In the seventh layer, add all nodes from the sixth layer, this is the overall output of the network. If the instance of 3-SAT is solvable, then $N(x)=n$ is reachable via setting the true atoms to 1 and the false ones to 0. The value $N(y)=0$ is always reachable, just assign every variable the value $\frac12$. This means that if the 3-SAT instance is solvable, then $x$ and $y$ are witnesses that $\GSR_\infty(N,\infty,n-\frac12)$ is false. On the other hand, if the 3-SAT instance is not solvable, then for every input $\Psi(\alpha +\beta+\gamma)=0$ for at least one of the clauses, so $N(x)\leq n-1$. Now we always have that $N(y)\geq0$ because $\Psi\geq0$, so $\vert N(x)-N(y)\vert\leq n-1$ and $\GSR_\infty(N,\infty,n-\frac12)$ is true.

We reduce 3-SAT to $\LR_\infty(F)$ and $\GLR_\infty(F)$ in the same way, the instance for $\LR_\infty(F)$ is $(N,\frac12,2n-1,(\frac12,...,\frac12))$ with the same network $N$ as constructed above for $\GSR_\infty(\{ReLU\})$. We chose $n-1<\frac  1L\cdot \varepsilon<n$ so that $\LR_\infty(N,\frac12,2n-1,(\frac12,...,\frac12))$ is equivalent to $\GSR_\infty(N,\infty,n-\frac12)$. This is true because the minimum of $N(x)$ is $0$ attained in $(\frac12,...,\frac12)$ and the maximum is either $n$ if the 3-SAT instance is satisfiable or at most $n-1$ if it is not, taken on the boundary of the unit hypercube. The growth between the minimum and the maximum is linear because the intersection points where ReLU is not differentiable are in $(\frac12,...,\frac12)$ and on the boundary of the unit hypercube, but never in between.

The instance $(N,\frac12,2n-1)$ constructed by the reduction to $\GLR_\infty(F)$ is the same, because no other pair of points $x,\bar x$ will lead to a steeper descent if the network was derived from a SAT-instance that is not satisfiable. This is because the steepest descent must necessarily point to the minimum if the function is linear along the connecting line and no other local minimum exists. $\hfill\blacksquare$
\end{proof}

\begin{corollary}
$\GSR_\infty(\{ReLU\})$, $\NE(\{ReLU\})$, $\CR_\infty(\{ReLU\})$,\\
$\SR_\infty(\{ReLU\})$, $\LR_\infty(\{ReLU\})$ and $\VIP(\{ReLU\})$ are co-NP-complete.
\end{corollary}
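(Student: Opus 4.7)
The plan is to piece together membership from Proposition~\ref{VIPcoNP} and hardness through a reduction chain that starts at the co-NP-hard problems established in Theorem~\ref{SRGNE}(ii). For co-NP membership of $\VIP(\{ReLU\})$, $\NE(\{ReLU\})$, $\CR_\infty(\{ReLU\})$, $\SR_\infty(\{ReLU\})$ and $\LR_\infty(\{ReLU\})$ there is nothing to do: all five are covered directly by Proposition~\ref{VIPcoNP} since $ReLU$ is piecewise linear and $d_\infty\in\{d_1,d_\infty\}$. For $\GSR_\infty(\{ReLU\})$ the same certificate scheme from the proof of Proposition~\ref{VIPcoNP} applies: a no-instance is witnessed by two input points $x,\bar x$ with $\|x-\bar x\|_\infty\le\varepsilon$ and $\|N(x)-N(\bar x)\|_\infty>\delta$; we guess a linear piece for every $ReLU$-node on each of the two computations (a copy of $N$ for $x$ and a copy for $\bar x$) together with the output coordinate that violates the $\delta$-bound, and verify feasibility of the resulting (possibly strict) linear system in polynomial time by the result of Jonsson--B\"ackstr\"om cited in that proof.

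Hardness is obtained by chaining the reductions already proved. Theorem~\ref{SRGNE}(ii) gives co-NP-hardness of $\GSR_\infty(\{ReLU\})$ and $\LR_\infty(\{ReLU\})$ outright. The reduction of Theorem~\ref{SRGNE}(i) transports the hardness of $\GSR_\infty(\{ReLU\})$ to $\NE(\{ReLU\})$. From $\NE(\{ReLU\})$, Proposition~\ref{NECR} yields a polynomial-time reduction to $\CR_{d_\infty}(\{ReLU\})$; this proposition is stated for $F$ containing $id$, but the earlier Lemma allows each identity node in the reduction to be replaced by two $ReLU$ nodes at linear cost, so the ambient activation set can indeed be taken to be $\{ReLU\}$. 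Finally, Theorem~\ref{SRCRVIP}(iv) reduces $\CR_\infty(\{ReLU\})$ to $\SR_\infty(\{ReLU\})$, and Theorem~\ref{SRCRVIP}(i) reduces $\SR_\infty(\{ReLU\})$ to $\VIP(\{ReLU\})$, completing the hardness side.

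Combining the two directions for each problem produces the stated co-NP-completeness list. The only step that is not literally a citation of an earlier statement is the co-NP upper bound for $\GSR_\infty(\{ReLU\})$, and even there the work amounts to observing that the certificate of Proposition~\ref{VIPcoNP} still has polynomial length once the network is duplicated. I expect no genuine obstacle beyond keeping track that every reduction in the chain stays inside $F=\{ReLU\}$, which the identity-to-$ReLU$ Lemma guarantees uniformly.\hfill$\blacksquare$
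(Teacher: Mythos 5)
Your proposal is correct and follows essentially the same route as the paper: co-NP membership via Proposition~\ref{VIPcoNP}, co-NP-hardness of $\GSR_\infty$ and $\LR_\infty$ from Theorem~\ref{SRGNE}(ii), and the reduction chain $\GSR_\infty \to \NE \to \CR_\infty \to \SR_\infty \to \VIP$ assembled from Theorem~\ref{SRGNE}(i), Proposition~\ref{NECR} and Theorem~\ref{SRCRVIP}(iv),(i). The only cosmetic difference is that the paper obtains co-NP membership of $\GSR_\infty(\{ReLU\})$ from that same chain ending in $\VIP\in$ co-NP, whereas you give a direct two-point certificate argument; both are fine, and your explicit remark that the identity-to-$ReLU$ Lemma keeps every reduction inside $F=\{ReLU\}$ is a point the paper leaves implicit.
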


\begin{proof}
$\VIP(\{ReLU\})$ is in co-NP by Theorem \ref{VIPcoNP} and $\GSR_\infty(\{ReLU\})$ is co-NP hard by Theorem \ref{SRGNE}, $ii)$. The chain of reductions in between is provided by Theorem \ref{SRGNE} $i)$, Theorem \ref{NECR}, and Theorem \ref{SRCRVIP} $iv),i)$.
$\hfill\blacksquare$
\end{proof}

\section{Network Minimization}\label{Section:mini}
In this section, we want to analyze different criteria on whether a network is unnecessarily huge. The evaluation of a function given as a network becomes easier and faster for small networks, it is therefore desirable to look for a tradeoff between accuracy and runtime performance. In the worst case, a long computation in an enormously big network that was assumed to be more accurate due to its size, could lead to rounding errors in both the training and the evaluation, making the overall accuracy worse than in a small network.

\begin{definition}
Let $F$ be a set of activations, $N$ an $F$-network and $K$ the set of nodes of $N$.

a) $N$ is called minimal, if no other $F$-network with less many nodes than $N$ computes the same function. The decision problem $\mMIN(F)$ asks whether a given $F$-network $N$ is minimal, in that case we write $\mMIN(N)$.

b) Let $Y\subseteq K$ be a subset of the nodes, $G_N$ the underlying graph of $G$ and $G_N'$ its subgraph induced by $K\backslash Y$. The network $N'$ whose underlying is $G_N'$ and where all activations, biases and weights are the same as in the corresponding places in $N$, is said to be obtained from $N$ by deleting the nodes in Y. By abuse of notation we denote it by $N\backslash K:=N'$.

c) A subset $Y\subseteq K$ of hidden nodes of a network $N$ is called unnecessary, if the network $N\backslash Y$ produced by deleting these nodes still computes the same function and otherwise necessary. The problem to decide whether sets of nodes are necessary is \NECE$(F)$, we write \NECE$(N,Y)$ if $Y$ is necessary.

d) If \NECE$(N,Y)$ for all non-empty subsets $Y\subseteq K$ of nodes, we write \ANECE$(N)$, this also defines a decision problem in the obvious way.
\end{definition}

It obviously holds that $\mMIN(N)\Rightarrow\mANECE(N)$, the reverse direction however does not hold in general:

\begin{example}
A network can contain only necessary nodes and still not be minimal. Consider for example the $\{id,ReLU\}$-network $N$ that consists of an input node $x$, two hidden nodes $y_{1,1}=ReLU(x)$ and $y_{1,2}=ReLU(-x)$ and one output node with weights 1 and -1 and bias 0, so it computes $y_2=id(y_{1,1}-y_{1,2})=x$. The network represents the identity map and is not minimal, because the smallest $\{id,ReLU\}$-network $M$ computing the identity consists of an input node, no hidden nodes and one output node with activation $id$, weight 1 and bias 0. However every non-empty subset of hidden nodes in $N$ is necessary; if we delete $y_{1,1}$, the remaining network computes the negative part $x^-=min\{0,x\}$ of the input, if we delete $y_{1,2}$ it computes the positive part and if we delete both it computes the zero-function for the last node has no more incoming nodes at all and thus returns the empty sum. We call this network $Q$. \footnote{This also illustrates that the smallest network $M$ that is equivalent to $N$ is in general not isomorphic to a substructure of $N$.}

Also consider the net $K$ computing the zero-function in the following way: In the first hidden layer, compute $y_{1,1}=y_{1,2}=ReLU(x)$, the output-node is connected to both with weights 1 and -1, respectively, and bias 0, so it computes $0=ReLU(x)-ReLU(x)$. In $K$, both hidden nodes on their own are necessary, the subset containing both however is not, because when deleting those two, we remain with the network $Q$ that also computes the zero-function.

\end{example}

\begin{theorem}
Let $id\in F$, then the complement of \NECE$(F)$ and \NE$(F)$ are linear-time equivalent.
\end{theorem}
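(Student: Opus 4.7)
The plan is to exhibit linear-time reductions in both directions between the complement of \NECE\ and \NE. The easy direction is co-\NECE\ $\leq$ \NE: given an instance $(N,Y)$, I output the pair $(N, N\backslash Y)$. Computing $N\backslash Y$ merely removes the nodes in $Y$ together with their incident edges, so the reduction is linear in the input size, and by definition $Y$ is unnecessary iff $N$ and $N\backslash Y$ compute the same function, which is precisely \NE$(N, N\backslash Y)$.

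For \NE\ $\leq$ co-\NECE\ the idea is to build, from $(N_1,N_2)$, a single network $N$ together with a set $Y$ of hidden nodes with two designed properties: $N$ always computes $N_1$, and $N\backslash Y$ always computes $N_2$; then $Y$ is unnecessary in $N$ iff $N_1 \equiv N_2$. Concretely, I share a single input layer and run two parallel subnetworks $P_1, P_2$ computing $N_1(x)$ and $N_2(x)$. I then append a layer of $m$ identity nodes $z_i$ with incoming weights $+1$ from $P_1$'s $i$-th output and $-1$ from $P_2$'s $i$-th output, so that $z_i(x) = N_1(x)_i - N_2(x)_i$. The output layer of $N$ consists of $m$ identity nodes $o_i$ with weights $+1$ from $P_2$'s $i$-th output and $+1$ from $z_i$; telescoping gives $o_i(x) = N_2(x)_i + (N_1(x)_i - N_2(x)_i) = N_1(x)_i$, hence $N \equiv N_1$.

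Now I take $Y$ to consist of all nodes of $P_1$ together with the $z_i$-layer; all of these are hidden in $N$, as required by the definition of \NECE. In $N\backslash Y$ each output node $o_i$ retains only its single $+1$ incoming edge from $P_2$'s $i$-th output, so $(N\backslash Y)(x)_i = N_2(x)_i$ and $N\backslash Y \equiv N_2$. Therefore \NE$(N_1, N_2)$ holds iff $Y$ is unnecessary in $N$, as desired. The only activations introduced beyond those of $N_1$ and $N_2$ are identities, so the hypothesis $id \in F$ is exactly what is used, and the construction is clearly linear time. The only subtlety worth checking is that $N\backslash Y$ is still a well-formed feedforward network; this is automatic since $P_2$ and the output layer survive intact and all removed edges are incident to deleted nodes, which is also the step one has to be a little careful with when writing out the formal construction.
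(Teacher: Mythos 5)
Your proposal is correct, and the first direction (co-\NECE\ $\leq$ \NE\ via $(N,Y)\mapsto(N,N\backslash Y)$) is exactly the paper's. For the converse direction you use a genuinely different gadget. The paper merges $N_1$ and $N_2$ into a net $P$ whose last hidden layer computes $y=N_1(x)-N_2(x)$ and whose output is $id(y)$; deleting $\{y\}$ leaves the output node with no incoming edges, so $P\backslash\{y\}\equiv 0$ by the empty-sum convention, and $\{y\}$ is unnecessary iff $N_1\equiv N_2$. You instead build a net that computes $N_1$ via the telescoping $o_i=N_2(x)_i+(N_1(x)_i-N_2(x)_i)$ and degrades to $N_2$ when the $P_1$-part and the $z$-layer are deleted. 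Your version has the advantage of not leaning on the convention that a node with no surviving inputs outputs zero (which the paper's argument needs, and which is a slightly delicate point of the deletion semantics); the paper's version is more economical in that the deleted set is a single node. One technical remark: as written, your output node $o_i$ takes edges from $P_2$'s output layer and from the $z$-layer, which are different layers, and similarly the $z_i$ need $P_1$ and $P_2$ to end at the same depth; in the strictly layered model of the paper you must pad with $id$-nodes so that all inputs to a node come from the immediately preceding layer. This costs only linearly many extra identity nodes and is the same fix the paper itself uses elsewhere (e.g.\ in the reduction from standard to classification robustness), so it does not affect correctness or the linear-time bound.
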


\begin{proof}
The first reduction is trivial, 
\[\mNECE(N,\{y_1,...,y_k\})\Leftrightarrow\lnot\mNE(N,N\backslash\{y_1,...,y_k\})\]

For the other direction, let $M,N$ be two $F$-networks. We construct an instance of \NECE$(F)$ that is equivalent to $\lnot\mNE(N,M)$ as follows:

First, we merge $N$ and $M$ into a network $P$ that computes $N(x)$ and $M(x)$, then computes $y=N(x)-M(x)$ in the last hidden layer and then propagates the value $P(x)=id(y)$ to the output layer. Now if $y$ is deleted, $P\backslash\{y\}\equiv0$, because then the output node would have no incoming edges at all, so it would compute the identity over the empty sum, which is 0. It follows that $y$ is unnecessary iff $P\equiv0$, which in turn is the case iff $N\equiv M$ meaning \NE$(N,M)$, so we have that $\lnot\mNE(N,M)\Leftrightarrow\mNECE(P,\{y\})$. $\hfill\blacksquare$
\end{proof}

\begin{theorem}
Let $F$ only contain semi-linear functions. Then
\begin{description}
\item[i)] \MIN$(F)$ is in $\Pi_2^P$.
\item[ii)] \ANECE$(F)$ is in $\Pi_2^P$.
\end{description}
\end{theorem}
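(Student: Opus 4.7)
My plan is to place the complements of both problems in $\Sigma_2^P = \mathrm{NP}^{\mathrm{NP}}$, using that $\mNE(F) \in $ co-NP by Proposition \ref{VIPcoNP} serves as the NP oracle.

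For part (ii), a network $N$ fails $\mANECE$ precisely when some nonempty subset $Y$ of its hidden nodes satisfies $N \equiv N\backslash Y$. A $\Sigma_2^P$ algorithm therefore nondeterministically guesses such a $Y$ in at most $|K|$ bits, and then queries the oracle to decide whether $\mNE(N,N\backslash Y)$ holds. Accepting exactly when the answer is yes places $\neg\mANECE(F)$ in $\Sigma_2^P$, whence $\mANECE(F) \in \Pi_2^P$.

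For part (i) the analogous plan is to guess a candidate smaller network $N'$ and to verify $\mNE(N,N')$ via a single oracle call. The graph of $N'$ occupies only polynomially many bits since it has strictly fewer nodes than $N$, but its rational weights and biases are a priori unbounded, so naive nondeterministic guessing need not yield a polynomial witness. The main obstacle is therefore a polynomial bit-size bound: whenever some $F$-network $N'$ strictly smaller than $N$ satisfies $N' \equiv N$, one such $N'$ must already exist whose weights and biases have bit-length polynomial in the size of $N$.

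I would establish this bound by exploiting the semi-linearity of $F$. Fix a candidate graph for $N'$ together with a \emph{combinatorial type} that records, for each node of $N'$ and each linear region that $N$ induces on $\R^n$, which linear piece of its activation function the node uses. Under such a fixed type, the equivalence condition $N'(x)=N(x)$ evaluated at a rational witness point of polynomial bit-size inside each region becomes a system of linear constraints on the weights of $N'$ with polynomial-bit-size coefficients, so the set of admissible weights decomposes into a union of polyhedra of polynomial description. By the standard LPF bit-size bound, any nonempty polyhedron in this union contains a rational point whose bit-length is polynomial in $|N|$. Consequently, whenever $\neg\mMIN(N)$ holds, a witness $N'$ of total polynomial size must already exist; the $\Sigma_2^P$ algorithm then guesses this $N'$ and calls the co-NP oracle for $\mNE(N,N')$ to finish, placing $\mMIN(F) \in \Pi_2^P$.
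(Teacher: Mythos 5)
Your part (ii) is exactly the paper's argument: nondeterministically guess a nonempty subset $Y$ of hidden nodes and verify $\NE(N,N\backslash Y)$ with a co-NP computation; this correctly places the complement in $\Sigma_2^P$. For part (i) your overall architecture (guess a smaller network $M$, verify $\NE(N,M)$ via the co-NP oracle) is also the paper's, and you are right to flag a point the paper silently skips: minimality is defined purely by node count, so the guessed witness $M$ could a priori require weights of superpolynomial bit-size, and without a size bound the guess is not a legitimate $\Sigma_2^P$ witness.

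However, your proposed repair of that gap does not go through as stated. Once you fix a graph and a combinatorial type for $N'$, the function computed on a fixed region is indeed affine in $x$, but its coefficients are \emph{polynomials} in the unknown weights of $N'$ --- weights of different layers get multiplied together --- so the conditions $N'(x)=N(x)$ at your witness points form a system of polynomial, not linear, equations in the weights. The admissible weight set is therefore semi-algebraic rather than a union of polyhedra, the LPF small-solution bound does not apply, and a nonempty semi-algebraic set need not contain any rational point at all (let alone one of polynomial bit-length), so the existence of a short witness does not follow. A secondary issue is that the number of linear regions induced by $N$ can be exponential in the size of $N$, so the combinatorial type you quantify over and the resulting constraint system are themselves exponentially large, which further undercuts the ``polynomial description'' you invoke. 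To close part (i) one needs an actual bit-size bound for the weights of a minimal equivalent semi-linear network (perhaps via a normalization exploiting the positive homogeneity of ReLU), or one must read $\MIN$ as quantifying only over candidate networks whose weight bit-sizes are bounded by those of $N$; be aware that the paper's own one-line proof leaves this same point unaddressed.
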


\begin{proof}\begin{description}
\item[i)] We argue that co-\MIN\ is in $\Sigma_2^P$. The algorithm non-deterministically guesses a smaller representation $M$ of the function/network $N$ and then proves that \NE$(N,M)$, which is co-NP-complete, see \cite{adrian}.
\item[ii)] In the same way, observe that co-\ANECE$(F)$ can be solved by non-\\
deterministically guessing an unnecessary subset $Y$ and then showing that the network  remaining after deleting $Y$ is equivalent to the original net.$\hfill\blacksquare$
\end{description}
\end{proof}

\begin{theorem}
If id is among the activations, \NE\ reduces to \ANECE\ in polynomial time.
\end{theorem}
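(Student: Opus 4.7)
The plan is to leverage the previous theorem's equivalence between the complements of $\mNECE(F)$ and $\mNE(F)$, which constructed, for any $\mNE$-instance $(N_1,N_2)$, a merged network $Q$ with a distinguished id-node $y = \mathrm{id}(N_1(x) - N_2(x))$ such that $\{y\}$ is unnecessary in $Q$ iff $N_1 \equiv N_2$. I will lift that single-subset test to the full $\mANECE$ quantification over every non-empty subset of hidden nodes, by wrapping $Q$ with rigidifying structure made from id-nodes.

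The construction of $P$ proceeds by three augmentations, all exploiting the availability of id-activations. First, I append a constant-$1$ output dimension to both $N_1$ and $N_2$ via an id-node with bias $1$ and no incoming edges. This preserves equivalence (the augmented versions are equivalent iff $N_1 \equiv N_2$) and prevents the degenerate case $N_1 \equiv N_2 \equiv 0$, in which whole subnets could otherwise be jointly deleted without effect. Second, I attach to every hidden node $v$ of $N_1$ and $N_2$ a short id-chain that routes $v$'s value to its own ``sentinel'' output coordinate of $P$; after a pre-processing pass that removes trivially zero id-nodes (those with all-zero incoming weights and zero bias), every remaining hidden node is non-constantly-zero and its sentinel flags its deletion. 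Third, I propagate the $y$-node through an id-chain to a dedicated output coordinate of $P$ and route the augmented outputs of $N_1$ and $N_2$ directly to further output coordinates as well.

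The main argument will be a case analysis on subsets $Y$ of hidden nodes of $P$: subsets touching a sentinel chain change the corresponding sentinel; subsets touching an input pass-through change that output; subsets consisting of an entire augmented subnet change the constant-$1$ coordinate; and subsets touching the $y$-gadget are handled by the previous theorem's analysis. The main obstacle is getting the direction of the reduction right -- the naive lift of $\{y\}$'s behavior gives $\mANECE(P) \Leftrightarrow N_1 \not\equiv N_2$, the opposite of the desired direction. Since a feedforward net cannot structurally distinguish ``hidden node deleted'' from ``hidden node valued zero'' under the usual activations, the delicate technical step -- and where most of the proof's weight will lie -- is arranging the wiring around $y$ so that the equivalence is detected by a subset whose necessity pattern flips correctly, for instance by using the constant-$1$ augmentation to offset the $y$-gadget's output so that deletion becomes distinguishable from a zero value.
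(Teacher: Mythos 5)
Your overall strategy---merging $N_1$ and $N_2$ by subtracting corresponding outputs, removing constantly-zero nodes, and attaching id-chains (``sentinels'') that expose each remaining hidden node's value at its own output coordinate so that no deletion goes unnoticed---is essentially the strategy of the paper's proof, which likewise merges the nets, deletes constantly-zero nodes layer-wise, and allocates an additional id-node reading off each node. The decisive difference is that the step you yourself flag as carrying ``most of the proof's weight'' is left unresolved, and the fix you gesture at does not work. Once every surviving hidden node is non-constantly-zero and has a sentinel, every non-empty subset whose deletion perturbs some sentinel is automatically necessary; the only way $\mANECE(P)$ can fail is through a node, or a cancelling set of nodes, whose net contribution to \emph{every} output is identically zero. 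The $y$-gadget has exactly this property precisely when $N_1\equiv N_2$, so the construction delivers $\mANECE(P)\Leftrightarrow N_1\not\equiv N_2$. Offsetting with the constant-$1$ node does not flip this: taking $y=id(D(x)+1)$ and subtracting the constant node $c$ downstream makes $\{y\}$ and $\{c\}$ individually necessary, but the pair $\{y,c\}$ is unnecessary exactly when $D\equiv 0$---the same wrong direction. The obstruction is structural: ``some non-empty subset is unnecessary'' is a universal statement over all inputs about the deleted subnetwork's contribution, yet in the direction $N_1\not\equiv N_2\Rightarrow\lnot\mANECE(P)$ you must derive it from the purely existential fact that $N_1(x_0)\neq N_2(x_0)$ for a single $x_0$. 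No wiring described in the proposal bridges this quantifier mismatch, so the biconditional $\mNE(N_1,N_2)\Leftrightarrow\mANECE(P)$ is not established.

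A second, smaller gap: the claim that removing id-nodes with all-zero incoming weights and zero bias leaves only non-constantly-zero hidden nodes is false. A node such as $ReLU(-ReLU(x)-1)$, or one whose incoming contributions cancel identically, is semantically zero while syntactically nontrivial; any such surviving node forms a singleton unnecessary set and falsifies $\mANECE(P)$ independently of whether $N_1\equiv N_2$. Since deciding semantic zero-ness of a node is itself a co-NP-hard instance of the very problem being reduced, this cannot be an unrestricted preprocessing pass inside a polynomial-time reduction; it has to be confined to positions in the construction where zero-ness is syntactically decidable, and your case analysis would need to account for the nodes where it is not.
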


\begin{proof}
Let $(N,M)$ be an instance of \NE. First, merge both $N$ and $M$ into one network by subtracting their corresponding output nodes. Then delete all nodes that are constantly zero. These are detected layer-wise starting with the nodes right after the input layer. For each node $n$, delete all nodes that do not affect it in the sense that they are in subsequent layers, in the same layer, or in a previous layer but with no connection or connection with weights 0 towards $n$. Then allocate one additional id-node in the following layer and connect it to $n$ with weight 1 and no bias.$\hfill\blacksquare$
\end{proof}

\section{Conclusion}
We examined the computational complexity of various verification problems such as robustness and minimality for neural networks in dependence of the activation function used. We explained the connection between the distance function in use and the complexity of the resulting problem. We provided a framework for minimizing networks and examined the complexities of those questions as well.

Further open questions are:

\begin{description}
\item[1.)] Is $\LR_1(ReLU)$ complete for co-NP? Are $\LR_1(F)$ and $\LR_\infty(F)$ in general reducible on each other?

\item[2.)] Do sets described by functions computed by networks using only $ReLU$ (or only semi-linear activations) have more structure than just any semi-linear set? Can this structure be exploited to calculate or approximate the measure of such a set quicker than with the known algorithms?

\item[3.)] Are \ANECE\ and \MIN$(F)$ $\Pi_2^P$-complete if $F$ only contains semi-linear functions? 

\item[4.)] Several results rely on the identity activation. Can this be omitted? Or is there a problem that becomes substantially easier when the identity map is dropped? 

\end{description}
Acknowledgment: I want to thank Klaus Meer for helpful discussion and the anonymous referees for various hints improving the writing.

\bibliography{literatur}

\bibliographystyle{plain}

\end{document}